\begin{document}
\AddToShipoutPictureBG*{%
  \AtPageUpperLeft{%
    \hspace{18cm}%
    \raisebox{-1.5cm}{%
      \makebox[0pt][r]{Submitted to the International Conference on Intelligent Robots and Systems (IROS), October 2025.}}}}

\title{Extended Version: Multi-Robot Motion Planning with Cooperative Localization}

\author{Anne Theurkauf, Nisar Ahmed, Morteza Lahijanian}

\maketitle

\newtheorem{theorem}{Theorem}
\newtheorem{problem}{Problem}
\newtheorem{myexam}{Example}
\newtheorem{assumption}{Assumption}
\newtheorem{proposition}{Proposition}
\newtheorem{corollary}{Corollary}
\newtheorem{lemma}{Lemma}
\newtheorem{definition}{Definition}
\newtheorem{remark}{Remark}

\newcommand{\proj}[1]{\textsc{proj}_{#1}}
\newcommand{\A}{\mathcal{A}}
\newcommand{\B}{\mathcal{B}}
\newcommand{\C}{\mathcal{C}}
\newcommand{\E}{\mathcal{E}}
\newcommand{\N}{\mathcal{N}}
\newcommand{\R}{\mathcal{R}}
\newcommand{\Scal}{\mathcal{S}}
\newcommand{\U}{\mathcal{U}}
\newcommand{\W}{\mathcal{W}}
\newcommand{\X}{\mathcal{X}}
\newcommand{\Y}{\mathcal{Y}}

\newcommand{\xeuc}{\text{x}}

\newcommand{\safe}{\text{safe}}
\newcommand{\coll}{\text{coll}}

\newcommand{\expBelief}{\textbf{b}}

\newcommand{\ml}[1]{\textcolor{blue}{[ML: #1]}}
\newcommand{\at}[1]{\textcolor{purple}{[AT: #1]}}
\newcommand{\nra}[1]{\textcolor{orange}{[NRA: #1]}}
\newcommand{\qh}[1]{\textcolor{blue}{[QH: #1]}}

\begin{abstract}

We consider the uncertain multi-robot motion planning (MRMP) problem with cooperative localization (CL-MRMP), under both motion and measurement noise,
where each robot can act as a sensor for its nearby teammates.
We formalize CL-MRMP as a chance-constrained motion planning problem, and 
propose a safety-guaranteed algorithm that explicitly accounts for robot-robot correlations. Our approach extends a sampling-based planner to solve CL-MRMP while preserving probabilistic completeness. To improve efficiency, we introduce novel biasing techniques. We evaluate our method across diverse benchmarks, demonstrating its effectiveness in generating motion plans, with significant performance gains from biasing strategies.

\end{abstract}

\IEEEpeerreviewmaketitle

\section{Introduction}

Multi-robot teams are powerful assets, 
offering diverse capabilities and enabling parallel operation to improve efficiency and coverage in applications ranging from warehouse automation to space exploration \cite{Pomares2023, Leitner2009}. 
Multi-robot teams are particularly advantageous 
in adversarial environments such as GPS-denied settings, as each robot can serve as a sensor for others, reducing overall uncertainty through cooperation localization (CL). For example, robots can obtain relative measurements from nearby teammates \cite{Gao2020} to correct drift from inertial sensors \cite{Russell2020_CLforUAV, Mokhtarzadeh2014_CoopInertialNav}
(e.g., Fig.~\ref{fig:example traj}). 
This however introduces a significant challenge in motion planning,
as the planner must not only determine collision-free trajectories but also coordinate CL opportunities.  
In this work, we focus on the multi-robot motion planning (MRMP) problem for teams operating with CL, which we refer to as \textit{CL-MRMP}.


A general approach to MRMP is centralized, coupled planning,  
which models all robots as a single meta-agent and plans for them simultaneously in the joint state space \cite{Wagner2025_MRMP}.
This is computationally challenging 
since the search space grows exponentially with the number of robots. 
Nevertheless, it allows the planner to maintain information about all the robots.
More scalable 
decoupled methods
\cite{Kotting2022_KCBS, Theurkauf2024_CCKCBS} 
plan for individual robots and selectively resolve conflicts. 
Yet, they neglect or only partly account for robot-robot interactions. This is problematic for scenarios requiring safe navigation via CL, 
which
introduces correlations between robots' state estimates~\cite{Roumeliotis2000} that must be considered for feasible planning. 

Online methods can mitigate this challenge by planning for the robot team over a short horizon. A common approach for CL is online distributed planning, where each robot plans locally over a short horizon while accounting for nearby robots and obstacles \cite{Patwardhan2023_DistMRMP,VanParys2016_OnlineDistMultiVehicle}. Another widely used technique is formation control, in which robot formations are designed to minimize uncertainty for the team \cite{Hidaka2005, Qu2021}, and online execution consists of maintaining the prescribed formation \cite{Guillet2014, Alonso-Mora2019}. While these methods are effective in unknown environments, they lack formal guarantees for safety or completeness.

\begin{figure}
    \centering
    \begin{subfigure}[b]{0.21\textwidth}
        \centering
        \fontsize{5}{5}\selectfont 
        \includegraphics[width=\textwidth,trim={3.8cm 1.9cm 3.9cm 2.1cm}, clip]{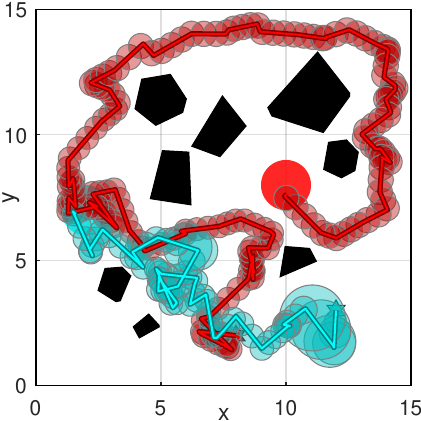}
        \caption{\small Trajectories}
        \label{fig:Random trajectory}
    \end{subfigure}
    \begin{subfigure}[b]{0.23\textwidth}
        \centering
        \fontsize{5}{5}\selectfont 
        \includegraphics[width=\textwidth,trim={4cm 2.1cm 4cm 3cm}, clip]{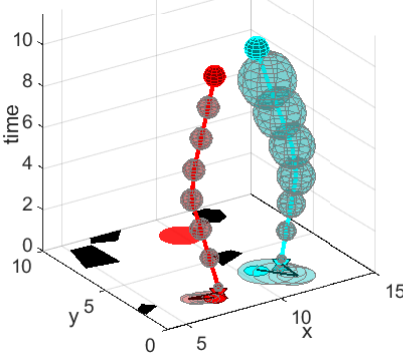}
        \caption{\small Trajectory with time axis}
        \label{fig:Random trajectory zoom}
    \end{subfigure}
    \caption{\small
    CL-MRMP solution plan for 2 robots with motion and sensing uncertainties 
    (initial states are
    near the bottom of the figure, and their goal regions are indicated by red and cyan circles). 
    Cyan robot lacks 
    onboard sensors, 
    but the solution plan enables it to use the red robot as a sensor, reducing its uncertainty and allowing it to successfully navigate to its goal region. Afterward, the plan guides the red robot to its goal. 
    (trajectory circles: 2$\sigma$ bounds). 
    }
    \label{fig:example traj}
    \vspace{-5mm}
\end{figure}



This work formally defines the CL-MRMP problem and proposes a safety-guaranteed algorithm that explicitly accounts for both motion and measurement uncertainty. We first establish the necessity of a centralized estimator to track the coupling of robot states induced by CL. Then, we formulate CL-MRMP as a chance-constrained planning problem, allowing us to adapt existing algorithms. We extend the sampling-based algorithm in \cite{Ho2022_GBT} to solve CL-MRMP and demonstrate that it inherits the probabilistic completeness properties of the underlying algorithm. Additionally, we introduce biasing techniques to improve performance. Our algorithm is evaluated on a diverse set of benchmarking problems, and the results show that it effectively addresses the CL-MRMP problem, with our biasing techniques significantly enhancing performance.  

Our main contributions are: (i) a formalization of the CL-MRMP problem as a chance-constrained motion planning problem, (ii) a sampling-based planning algorithm that accurately accounts for robot-robot correlations in state estimates, (iii) novel biasing techniques for more efficient planning with CL, and (iv) extensive benchmarks and illustrative examples demonstrating the efficacy of our formulation and approach.

\section{Problem Formulation}
\label{sec:Problem}
In this work, we consider $N_A\in \mathbb{N}_{\geq 2}$ robots
that evolve in a shared workspace $\W\subset\mathbb{R}^w$, $w\in \{2,3\}$ under both motion and sensor uncertainties.
These robots are capable of sensing and communicating with nearby robots.

\subsection{Robot Dynamics}
  The evolution of robot $i \in \{1,\ldots, N_A\}$ is governed by stochastic linear dynamics
\begin{equation}
\label{eq:dynamics_robot}
    x_{k+1}^i = A^i x_k^i + B^i u_k^i + w_k^i,
\end{equation}
where $x_k^i \in \X^i\subseteq \mathbb{R}^{n_i}$ and $u_k^i \in \U^i\subseteq  \mathbb{R}^{m_i}$ are the state and control at time step $k \in \mathbb{N}_{\geq 0}$, respectively with associated matrices $A^i\in \mathbb{R}^{n_i\times n_i}$ and $B^i\in \mathbb{R}^{n_i\times m_i}$, and $w_k^i \in \mathbb{R}^{n_i}$ is Gaussian distributed noise with zero mean and $Q^i\in \mathbb{R}^{n_i\times n_i}$ covariance, i.e., $w_k^i\sim \N(0,Q^i)$.


Each robot body $\B^i$ is defined as a set of points. We use $\B^i(x_k^i)\subset\W$
to denote the set of points in the workspace that robot $i$ occupies when placed at state $x_k^i$. 

\subsection{Robot Measurements}

We assume the robots are equipped with \emph{proprioceptive} and \emph{exteroceptive} sensors, allowing them to measure (and communicate) not only their own states but also those of nearby robots. Proprioceptive measurements pertain to an individual robot's state and are independent of all others (e.g., velocity encoder). 
Such measurements of robot $i$ are given by:
\begin{equation}
\label{eq:proprioceptive measurement}
    y_k^{i,prop} = C^{i,prop} x_k^i + v_k^{i,prop},
\end{equation}
where $y_k^{i,prop}\in \mathbb{R}^{q^{prop}_i}$, 
with associated matrix $C^{i,prop}\in\mathbb{R}^{q_i^{prop}\times n_i}$, and zero-mean Gaussian distributed noise $v_k^{i,prop}\sim \N(0,R^{i,prop})$, with covariance $R^{i,prop}\in \mathbb{R}^{q^{prop}_i\times q^{prop}_i}$.

In contrast, exteroceptive measurements are taken relative to other robots, introducing dependencies between their states (e.g., range). 
Such measurement between robots $i$ and $j$, where $i\neq j \in \{1,\ldots,N_r\}$, are modeled as
\begin{equation}
\label{eq:exteroceptive measurement}
    y_k^{ij,ext}=C_k^{ij,ext} x_k^{ij} + v_k^{ij,ext},
\end{equation}
where $y_k^{ij,ext}\in \mathbb{R}^{m^{ext}_{ij}}$ is 
defined with respect to the composed state $x_k^{ij}=[x_k^i, x_k^j]^T\in \mathbb{R}^{n_i+n_j}$, with associated mapping $C_k^{ij,ext}\in\mathbb{R}^{m^{ext}_{ij} \times (n_i+n_j)}$ and noise $v_k^{ij,ext}\sim \N(0,R_k^{ij,ext})$ and covariance $R_k^{ij,ext}\in \mathbb{R}^{m^{ext}_{ij}\times m^{ext}_{ij}}$. 
%
Measurement $y_k^{ij,ext}$
is \emph{only enabled} when the robots are within workspace radius $r_{ext}\in\mathbb{R}_{\geq 0}$, i.e., for $\proj{\W}: \cup_{i=1}^{N_A} \X^i \to \W$ denoting the projection operator of the state into the workspace, 
\begin{multline}
\label{eq:exteroceptive radius condition}
        \| \proj{\W}(x_k^i) - \proj{\W}(x_k^j) \| \leq r_{ext} \implies \\
        y_k^{ij,ext} \text{  available to robots $i$ and $j$},
\end{multline}
else $y_k^{ij,ext}$ does not exist.  
In contrast, note $y^{i,prop}_k$ is always available to robot $i$. 
Also note that $C_k^{ij,ext}$ is time-varying as 
condition \eqref{eq:exteroceptive radius condition} depends on the time-varying robot states. 

\subsection{Centralized Estimation and Cooperative Localization}
\label{sec:estimation and control}
Exteroceptive measurements induce correlations between robot states. To fully account for all correlations, we assume a centralized estimator 
over the composed states of all robots that has access to all measurements. 
We denote concatenation of a set of $N$ (column) vectors $\{\mathrm{v}_i\}_{i=1}^N$ 
as 
\begin{equation}
\textsc{Concat}(\{\mathrm{v}_i\}_{i=1}^N) = 
    [\mathrm{v}_1^T, \mathrm{v}_2^T, \ldots, \mathrm{v}_N^T]^T,
\end{equation}
and the block diagonal matrix constructed from a set of $N$ matrices $\{M_i\}_{i=1}^N$ as $\textsc{BlockDiag}(\{M_i\}_{i=1}^N)$.

The composed dynamics of the team of robots are given by: 
\begin{equation}
    \label{eq:dynamics_composed}
    X_{k+1} = A X_k + B U_k + W_k.
\end{equation}
where $X_k=\textsc{Concat}(\{x_k^i\}_{i=1}^{N_A})\in \X\subseteq\mathbb{R}^{n_N}$ with $n_N = \sum_{i=1}^{N_A} n_i$ and $U_k=\textsc{Concat}(\{u_k^i\}_{i=1}^{N_A}) \in \U \subseteq\mathbb{R}^{m_N}$ with $m_N = \sum_i^{N_A} m_i$ are the composed state and control, respectively, and 
matrices $A=\textsc{BlockDiag}(\{A^i\}_{i=1}^{N_A})$ and $B=\textsc{BlockDiag}(\{B^i\}_{i=1}^{N_A})$. The composed noise is distributed as $W_k \sim \N(0,Q)$, with $Q=\textsc{BlockDiag}(\{Q^i\}_{i=1}^{N_A})$.


The composed robot proprioceptive measurement model is:
\begin{equation}
    Y_k^{prop} = C^{prop} X_k + V_k^{prop}
\end{equation}
where $Y_k^{prop}=\textsc{Concat}(\{y_k^{i,prop}\}_{i=1}^{N_A})$ and $V_k^{prop}\sim \N(0,R^{prop})$. Because the individual proprioceptive measurements are independent, $C^{prop}=\textsc{BlockDiag}(\{ C^{i,prop}\}_{i=1}^{N_A})$ and $R^{prop}=\textsc{BlockDiag}(\{R^{i,prop}\}_{i=1}^{N_A})$. 

The composed exteroceptive measurement model is:
\vspace{-2mm}
\begin{equation}
\label{eq:exteroceptive composed measurement}
    Y_k^{ext} = C_k^{ext} X_k + V_k^{ext}
\end{equation}
where $Y_k^{ext}=\textsc{Concat}( \{y_k^{ij,ext}\}_{\models} )$, with $\{y_k^{ij,ext}\}_{\models}$ being the set of measurements obtained according to  \eqref{eq:exteroceptive radius condition}, and noise $V_k^{ext}\sim\N(0,R_k^{ext})$ with $R_k^{ext}=\textsc{BlockDiag}(\{R_k^{ij,ext}\}_{\models})$. Note that the constituent $C_k^{ij,ext}$ matrices are defined over the states of the two robots $i$ and $j$, and therefore $C_k^{ext}$ is not a simple $\textsc{BlockDiag}$. It instead must be constructed to preserve the mapping of the individual  $C_k^{ij,ext}$ to the full composed state. 
Additionally, while $C^{prop}$ is time-invariant, $C_k^{ext}$ is time-varying as the distances between robots change (as noted earlier). 
The full measurement equation is thus: 
\begin{equation}
\label{eq:measurement_composed}
    Y_k = C_k X_k + V_k,
\end{equation}
where $Y_k=\textsc{Concat}(\{Y_k^{prop}, Y_k^{ext}\})$, $C_k=\textsc{Concat}(\{C_k^{prop}, C_k^{ext}\})$, and noise $V_k \sim \N(0,R_k)$, with covariance $R_k=\textsc{BlockDiag}(\{R^{prop}, R_k^{ext}\})$. If no robot pairs satisfy \eqref{eq:exteroceptive radius condition} at time step $k$, then \eqref{eq:measurement_composed} reduces to $Y_k=Y_k^{prop} \!$ with $C_k=C^{prop}$.

The composed system therefore reduces the multi-robot system to a single linear system with Gaussian noise governed by \eqref{eq:dynamics_composed} and \eqref{eq:measurement_composed}. Hence, we can use a centralized Kalman Filter (KF) to maintain an online estimate of $X_k$ as Gaussian belief $b(X_k)$ with mean $\hat{X}_k \in \mathbb{R}^{n_N}$ and covariance $\Sigma_k \in \mathbb{R}^{n_N\times n_N}$, 
$$X_k \sim b(X_k) = \N(\hat{X}_k, \Sigma_k).$$
Note that $\Sigma_k$ fully captures robot-robot correlations. From this, we can extract the marginal belief for each robot.  We denote the belief of robot $i$ state $x_k^i$ by $b(x_k^i)$, i.e., $$x_k^i \sim b(x_k^i)=\mathcal{N}(\hat{x}_k^i,\Sigma_k^i).$$

\subsection{Motion Plan and Control}
\label{sec:Control}
We define a \textit{motion plan} for robot $i$ to be a tuple $(\check{u}^i, \check{x}^i, \check{C})$, where 
$\check{u}^i=(\check{u}_0^i,\check{u}_1^i,...,\check{u}_{T-1}^i) \in (\U^i)^*$  is a nominal control trajectory; 
$\check{x}^i=(\check{x}_0^i,\check{x}_1^i,...,\check{x}_T^i) \in (\X^i)^*$ is the nominal state trajectory obtained by propagation of the nominal dynamics  $\check{x}^i_{k+1} = A^i \check{x}^i_k + B^i \check{u}^i_k$ on $\check{u}^i$; 
and
$\check{C} = (\check{C}_0, \check{C}_1, \ldots, \check{C}_{T-1})$ is a sequence of (measurement) matrices used for KF. 
%

Robot $i$ executes the motion plan $(\check{u}^i, \check{x}^i, \check{C})$ with the feedback control law $    u_k^i=\check{u}_k^i - K^i(\hat{x}^i_k-\check{x}_k^i)$,
with gain matrix $K^i\in\mathbb{R}^{m_i\times n_i}$, where $\hat{x}^i_k$ is the centralized KF state estimate obtained using the measurement matrix $\check{C}_k$.  Note that 
(i) this controller stabilizes the robot about the nominal trajectory $\check{x}^i$, and 
(ii) the KF relies on exteroceptive measurements at time step $k$ if $\check{C}_k$ requires them (the following section elaborates on the feasibility of this). Finally, note that these definitions of the motion plan and controller extend those in~\cite{Theurkauf2024_CCKCBS} to appropriately account for CL.



\subsection{Probabilistic Objectives}
\label{sec:prob obj}

Each robot $ i $ is assigned a goal region $ \mathcal{X}^i_G \subset \mathcal{X}^i $ in its state space,
which
contains $ N_O $ (disjoint) static obstacles  $\X^i_{O_j} \subset \X^i$, where $j \in \{1,\ldots, N_O\}$. 
Each of the $ N_A $ robots acts as a dynamic obstacle and sensor. 
The motion planning task is to compute a plan for each robot to reach its goal while avoiding collisions with both static and moving obstacles. Additionally, if the motion plan requires two robots to use exteroceptive measurements for CL at time step $ k $, those robots must be within a distance $ r_{ext} $ of each other at that time step during the execution.  
Since the robots operate under uncertainty, all three requirements (goal satisfaction, collision avoidance, and CL) must be analyzed probabilistically. 

The probability of robot $ i $ being in 
$ \mathcal{X}_G^i $ at time 
$ k $ is 
$P_G^{i_k} = P(x_k^i \in \mathcal{X}_G^i) = \int_{\mathcal{X}^i_G} b(x^i_k)(s) \, ds,$  
where $ b(x^i_k)(s) $ 
is the probability density function evaluated at $ s $.  
Similarly, the probability of colliding with a static obstacle is  
$P^{i_k}_O = P(x^i_k \in \mathcal{X}^i_O) = \int_{\mathcal{X}^i_O} b(x^i_k)(s) \, ds,$  
where $ \mathcal{X}^i_O = \bigcup_{j=1}^{N_O} \mathcal{X}^i_{O_j} $ 
is the union of all obstacle regions.  
Finally, the probability of colliding with another robot $ j $ is 
$P^{ij_k}_{\text{coll}} = P((x^i_k, x^j_k) \in \mathcal{X}^{ij}_{\text{coll}}),$  
where  
$\mathcal{X}^{ij}_{\text{coll}} = \{(x^i_k, x^j_k) \in \mathcal{X}^i \times \mathcal{X}^j \mid \B^i(x^i_k) \cap \B^j(x^j_k) \neq \emptyset\}$  
is the set of states where the two robots' projected positions in the workspace overlap (collide). 

For probabilistic analysis of CL, 
let $ r^{ij}_k = \|\proj{\mathcal{W}}(x^i_k) - \proj{\mathcal{W}}(x^j_k) \| $ be the workspace distance between robots $ i $ and $ j $ at time 
$ k $. Since $ x^i_k $ and $ x^j_k $ are random variables, $ r^{ij}_k $ is also a random variable 
distributed as $r^{ij}_k\sim b(r^{ij}_k)$. The probability that robots $ i $ and $ j $ fail to use their exteroceptive measurements at time step $ k $ is 
\begin{align}
    \label{eq:prob CL}
    P(r^{ij}_k > r_{ext}) = 
    1 - \int_{0}^{r_{ext}} b(r^{ij}_k)(s) \, ds.
\end{align}  
If the motion plan requires CL,
and it is unavailable during execution, this results in a failure.
The probability of this failure must be captured in planning, as formalized below.

For two matrices $ D $ and $ E $, let $ D \sqsubset E $ denote that $ D $ is a submatrix of $ E $. Then, the probability of failure of CL for robots $ i $ and $ j $ at time step $ k $ under a given motion plan with measurement matrix $\check{C}_k$ is defined as  
\begin{equation}
\label{eq:CL probability}
    P^{ij_k}_{\neg CL} = 
    \begin{cases}
        P(r^{ij}_k > r_{ext}), & \text{if } C_k^{ij,ext} \sqsubset \check{C}_k, \\
        0, & \text{otherwise}.        
    \end{cases}
\end{equation}

Below, we state the safe CL-MRMP problem.

\subsection{CL-MRMP Problem}
\label{subsec:problem}
Consider $N_A$ robots with noisy dynamics in \eqref{eq:dynamics_robot} and noisy 
measurements in \eqref{eq:proprioceptive measurement} and \eqref{eq:exteroceptive measurement}, equipped with the (centralized) KF and feedback control law 
described in Secs.~\ref{sec:estimation and control} and \ref{sec:Control}.
Given a set of initial distributions $\{x_0^i = \N(\hat{x}^i_0, \Sigma^i_0)\}_{i=1}^{N_A}$, goal regions $\{\X^i_G\}_{i=1}^{N_A}$, and obstacles $\X_{O}$, 
and safety threshold $p_\text{safe}$, compute motion plan $(\check{u}^i, \check{x}^i, \check{C})$ for each robot $i\in \{1,\ldots,N_A\}$ from its initial state to goal region such that
\begin{subequations}
    \begin{align}   
       \label{eq:cc obs}
        &P^{i_k}_O + \!\!\!\!\! \sum_{j=1, j \neq i}^{N_A} \!\!\! (P^{ij_k}_{coll} + P_{\neg CL}^{ij_k})  \leq \! 1 \!- p_\text{safe} \;\; \forall k\in \{1,..,T\}, \\
        &P^{i_T}_G \geq p_\text{safe} 
       \label{eq:cc goal}
    \end{align}
    \label{eq:chance constraints}
\end{subequations}
where $P^{i_k}_O$, $P^{ij_k}_{coll}$, and $P_{\neg CL}^{ij_k}$ are defined in Sec.~\ref{sec:prob obj}.  The safety requirements in ~\eqref{eq:chance constraints} are known as \textit{chance constraints}.


\textbf{Approach Overview:} 
%
%
%
This problem is challenging as it requires both motion planning and scheduling cooperative measurements for a set of uncertain robots. As discussed earlier, 
this requires assuming a centralized estimator
to accurately track state correlations without losing information. 
Given this assumption, where the state of every robot is accessible, we adopt a centralized planning framework. Since the system is already represented as a single composed robot, a coupled planning approach is a natural first step toward effectively solving the CL-MRMP problem.  
Based on the lessons learned from this study, we can investigate a decoupled approach in future work.

Here, we propose a coupled planning framework that explicitly accounts for robot-robot correlations during planning. Our approach extends existing single-robot Gaussian belief planners by incorporating constraints on robot-robot collisions and exteroceptive measurement availability. 
We also introduce biasing techniques to enhance exploration of CL behaviors.

\section{Sampling-Based Planner Framework}
\label{sec:Planner Framework}
In this section, we detail a safety-guaranteed algorithm for CL-MRMP. Safety constraints are formulated with respect to the online belief $b(X^i_k)$ and thus conditioned on realized measurements. Without prior knowledge of the particular realization of $b(X^i_k)$, a motion plan cannot be guaranteed to satisfy the safety constraints. Our approach instead reasons over all possible online distributions by planning over the \emph{expected} belief, 
\begin{align*}
    \label{eq: expected belief}
    \expBelief(X_k) \! = \mathbb{E}_Y[b(X_k | X_0, Y_{0:k})] 
   = \!\!\int_{Y_{0:k}} \!\! \hspace{-3mm} b(X_k | X_0, Y_{0:k}) pr(Y_{0:k})dY.
\end{align*}
This guarantees that any execution of the returned plan satisfies the chance constraints. 
%
With known linear dynamics and measurement models with Gaussian noise, and the feedback control law of Sec. \ref{sec:Control}, the expected belief is Gaussian distributed: $\expBelief(X_k)=\N(\hat{X}_k,\Gamma_k)$. 
As shown in~\cite{Bry2011_BeliefProp},
the covariance $\Gamma_k = \Sigma_k+\Lambda_k$ can be calculated and evolved as sum of the online state uncertainty $\Sigma_k$ inflated by the uncertainty due to a priori unknown measurements $\Lambda_k$. 

The Belief-$\A$ planner \cite{Ho2022_GBT} provides a framework for uncertain single agent motion planning using the expected belief formulation with propagation according to \cite{Bry2011_BeliefProp}. In the following sections, we propose a sampling-based algorithm that adapts Belief-$\A$ for CL-MRMP. We detail specific adaptations for Belief-$RRT$ and Belief-$EST$ in Sec. \ref{sec:Belief A for CL-MRMP}, propose efficient methods for checking the safety constraints in Sec. \ref{sec:Prob Approx}, and provide three different CL biasing methods 
in Sec. \ref{sec:Biasing for CL}.


\subsection{Belief-$\A$ for CL-MRMP}
\label{sec:Belief A for CL-MRMP}
We adapt the Belief-$\A$ framework for $\A=\text{RRT}$ \cite{LaValle1998_RRT} and $\A=\text{EST}$ \cite{Hsu1997_EST} to solve the CL-MRMP problem. Both Belief-$RRT$ and Belief-$RRT$ build a search tree $G$ consisting of nodes $\mathbb{V}$ and edges $\mathbb{E}$. Nodes are the beliefs $\expBelief(X_k)$ (denoted $\expBelief_k$), and edges ($\overline{\expBelief_{k_1}\rightarrow \expBelief_{k_2}}$) consist of the nominal control, state, and measurement: ($\check{U}$, $\check{X}$, $\check{C}$). The algorithm follows the same steps as the original RRT and EST algorithms (selection, extension, and validation), as presented in Algorithm \ref{alg: Sampling-Based Planner}.

\vspace{-3mm}
\begin{algorithm}
\KwIn{$\X$, $\{\X_G^i\}$, $\W_O$, $N$}
\KwOut{$G$}
$G\leftarrow (\mathbb{V}\leftarrow \{b_0\}, \mathbb{E}\leftarrow\emptyset)$\;
\For{$N$ iterations}{
    $\expBelief_{select}\leftarrow\textsc{SelectBelief}()$\;
    $(\expBelief_{new}, \overline{\expBelief_{select}\rightarrow \expBelief_{new}}) \leftarrow\textsc{ExtendBelief}()$\;
    \If{$\textsc{ValidBelief}(\expBelief_{new})$}{
        $\mathbb{V}\leftarrow\mathbb{V}\cup \{\expBelief_{new}\}$\;
        $\mathbb{E}\leftarrow\mathbb{E}\cup \{\overline{\expBelief_{select}\rightarrow \expBelief_{new}}\}$\;
    }
}
\Return $G$
\caption{Belief-$\A$ for CL-MRMP}
\label{alg: Sampling-Based Planner}
\end{algorithm}
\vspace{-4mm}

RRT selects a state by uniformly sampling the state space ($\textsc{UniformSampleBelief}$), then selecting the closest node to the sampled state ($\textsc{Nearest}$). EST maintains a sparsity pdf over all the nodes in the tree, the selected node is sampled from this PDF ($\textsc{SparsityPDFsample}$).
Both the generic RRT and EST algorithms can be made more efficient by biasing, e.g., classic goal biasing. We propose various methods to bias toward CL with rate  $\epsilon$ in Sec. \ref{sec:Biasing for CL}: RRT is biased by modifying the sampled state ($\textsc{BiasedSampleBelief}$), 
whereas EST is biased by sampling from a 
pdf ($\textsc{BiasedPDFsample}$). 

\vspace{-3mm}
\begin{algorithm}
\KwIn{$\X$, $\epsilon$}
\KwOut{$\expBelief_{select}$}
$p\leftarrow \textsc{StandardUniformSample()}$\;
\If{$p<\epsilon$}{
    $\expBelief_{sample}\leftarrow \textsc{BiasedSampleBelief}()$\;
} \Else{
$\expBelief_{sample}\leftarrow \textsc{UniformSampleBelief}()$\;
}
$\expBelief_{select}\leftarrow \textsc{Nearest}(G, \expBelief_{sample})$\;
\Return $\expBelief_{select}$
\caption{\textsc{SelectBelief-RRT}()}
\label{alg:SelectBelief-RRT}
\end{algorithm}
\vspace{-4mm}

The belief validation function $\textsc{ValidBelief}$ checks for collisions with each obstacle and each other robots. Note that exactly checking the safety constraint requires integration over the belief, which is generally intractable, suitably efficient approximations are described in Sec~\ref{sec:Prob Approx}.

The $\textsc{ExtendBelief}$ function uses the belief propagation equations of \cite{Bry2011_BeliefProp}, but with the measurement matrix constructed 
to respect the CL chance constraint, i.e. $C_k$ is constructed only from the $C_k^{ij,ext}$ for robots $i$,$j$ that satisfy the constraint on \eqref{eq:CL probability}. As with checking the probabilistic safety constraint, exact evaluation of the CL constraint is intractable, we provide a detailed description of an efficient implementation in Section \ref{sec:Prob Approx} with the $\textsc{ExtEnabled}$ function.

\vspace{-3mm}
\begin{algorithm}
\KwIn{$\X$, $pdf_G$, $\epsilon$}
\KwOut{$\expBelief_{select}$}
$p\leftarrow \textsc{StandardUniformSample}()$\;
\If{$p<\epsilon$}{
    $\expBelief_{select}\leftarrow \textsc{BiasedPDFsample}()$\;
} \Else{
    $\expBelief_{select}\leftarrow \textsc{SparsityPDFsample}()$\;
}
\Return $\expBelief_{select}$
\caption{\textsc{SelectBelief-EST}()}
\label{alg:SelectBelief-EST}
\end{algorithm}
\vspace{-4mm}

Under the theoretical results in \cite{Ho2022_GBT}, Belief-$\mathcal{A}$ inherits the completeness properties of the underlying algorithm (RRT and EST in our case), therefore our algorithm is probabilistically complete. Correctness is preserved by correct evaluation of the chance constraints in the \textsc{ValidBelief} and \textsc{ExtEnabled} functions, which is discussed in Section \ref{sec:Prob Approx}. If \textsc{ValidBelief} and \textsc{ExtEnabled} use conservative approximations, then our algorithm is complete only with respect to the approximation.

\section{Efficient Validation of Chance Constraints}
\label{sec:Prob Approx}

Validation ensures correctness by requiring that any node added to the tree satisfies the chance constraints. Because this operation is called in every planning iteration it must be efficient. As discussed earlier, exact evaluation of the chance constraints is intractable. In this section, we provide suitably efficient and conservative approximations.

For all of the methods described in the subsequent sections we rely on the expedient of probability contours rather than exact integration of probability mass. For a random variable $z\in\mathbb{R}^n$ that is Gaussian distributed as $z\sim \N(\mu,\Sigma)$, the contour containing $p$ probability mass is an ellipse defined by the spectral decomposition of the covariance $\Sigma$ and a scaling factor $\alpha$ calculated from the inverse $\chi^2$ distribution.

\subsection{Probability allocation}
We allocate the safety probability from \eqref{eq:cc obs} among robot-obstacle collision, robot-robot collision, and violation of the exteroceptive measurement condition as 
$1-p_\text{safe} = p_{obs} + p_{rob} + p_{\neg CL}$, and correspondingly separate the safety constraint:
\vspace{-3mm}
\begin{equation*}
    P^{i_k}_O \leq p_{obs}, \quad 
    \sum_{j=1, j \neq i}^{N_A} P^{ij_k}_{coll} \leq  p_{rob}, \quad 
    \sum_{j=1, j \neq i}^{N_A} P_{\neg CL}^{ij_k} \leq p_{\neg CL}.
    \vspace{-1mm}
\end{equation*}
This formulation vastly simplifies checking for constraint violation by fixing a threshold for each type of violation.

\subsection{Robot-Obstacle Collision Checking}
Our validity checking 
uses the simplest 
and most computationally efficient
collision checking 
method
from \cite{Theurkauf2024_CCKCBS}. The robot body $\B^i$ is bounded by a sphere $\B^i\subset\Scal^i_{rob}$; a safety contour is then defined containing $p_\text{safe}$ probability mass for each robot. That elliptical contour is then bounded by a sphere which is inflated by the radius of $\Scal^i_{rob}$. If the inflated sphere does not intersect with any obstacles, the obstacle chance constraint is satisfied. We directly apply this method with the marginal $\expBelief(x_k^i)$. 
The simplicity of this method is well-suited to the complex CL-MRMP problem, especially with the more 
expensive biasing methods in Sec. \ref{sec:Biasing for CL}.

\subsection{Robot-Robot Collision Checking}

For the next two sections, we consider the joint distribution $\expBelief(X_k)$ to leverage information from correlation, and use the difference 
$\xeuc_k^{ij} = \proj{\W}(x_k^i)-\proj{\W}(x_k^j)$, which is distributed as $\expBelief(\xeuc_k^{ij})=\N(\hat{\xeuc}_k^{ij}, \sigma_k^{ij})$, with mean $\hat{\xeuc}_k^{ij} = \hat{\xeuc}_k^i-\hat{\xeuc}_k^j$ and covariance $\sigma_k^{ij} = \Sigma_k^i + \Sigma_k^j - 2\Sigma_k^{ij}$. The term $\Sigma_k^{ij}$ captures the off-diagonal terms of $\Sigma_k$ correlating robot $i$'s and $j$'s Euclidean state estimates. Note $\xeuc_k^{ij}$ is not the \emph{distance} $r^{ij}$ from Sec. \ref{sec:prob obj}, which is nonlinear in $x_k^i$ and therefore non-Gaussian.

Define the bounding spheres for each robot $\B^i\subset\Scal^i_{rob}$ and $\B^j\subset\Scal^j_{rob}$, with radii $r_i$ and $r_j$ respectively. Collision is then determined as $(\xeuc_k^i, \xeuc_k^j)\in \X_{coll}^{ij} \iff \|\xeuc_k^i-\xeuc_k^j\|\leq r_i+r_j$, and thus the set of collision states is a sphere $\R_r$ of radius $r_i+r_j$ centered on the origin in $\xeuc_k^{ij}$ space. The collision probability is the intractable integral $ P((\xeuc_k^i, \xeuc_k^j)\in \X_{coll}^{ij}) = \int_{\R_r} \expBelief(\xeuc_k^{ij})(s)ds$. 
Next define a probability ellipsoid on $\expBelief(\xeuc_k^{ij})$ containing $1-p_{rob}$ probability mass, and bound it with sphere $\Scal_{p_{rob}}$ such that $P(\xeuc^{ij}_k\in\Scal_{p_{rob}})\geq 1-p_{rob}$. If this sphere does not intersect with the ball $\R_r$, we can conclude that it is entirely subsumed by the excluded set $\R_r\subset\tilde{\Scal}_{p_{rob}}$,  $\tilde{\Scal}_{p_{rob}} = \{\xeuc_k^{ij} \notin \Scal_{p_{rob}}\}$, and therefore  $P(\xeuc_k^{ij} \in \R_r) \leq p_{rob}$. This validation procedure is presented in Alg.~\ref{alg:RobotRobotCollision}.

\vspace{-3mm}
\begin{algorithm}
\KwIn{$\expBelief^{ij} = \N(\hat{\xeuc}^{ij},\sigma^{ij})$, $r_i$, $r_j$}
$\lambda_{max}\leftarrow \textsc{MaxEigenvalue}(\sigma^{ij})$\;
$\alpha \leftarrow \textsc{inv}\chi^2(p_{rob},2)$\;
$r_{rob}\leftarrow \sqrt{\alpha\lambda_{max}}$\;
\If{$\|\hat{\xeuc}^{ij}\|-r_{rob} > r_i + r_j$}
{
    \Return True\;
}
\Return False
\caption{$\textsc{RobotRobotCollision}$}
\label{alg:RobotRobotCollision}
\end{algorithm}
\vspace{-3mm}

\begin{theorem}
    \label{thm:Robot-robot collision}
    The validity checking Alg.~\ref{alg:RobotRobotCollision} guarantees the satisfaction of the robot-robot collision constraint $P^{ij_k}_{coll} \leq  p_{rob}$ if it returns True.
\end{theorem}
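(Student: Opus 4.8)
The plan is to show that returning \texttt{True} certifies the collision ball $\R_r$ lies entirely inside a low-probability complement region of the belief, so that the mass it captures is at most $p_{rob}$. The argument is a chain of set inclusions: a Mahalanobis ellipsoid carrying $1-p_{rob}$ mass $\subseteq$ the bounding ball $\Scal_{p_{rob}}$, whose complement $\tilde{\Scal}_{p_{rob}}$ contains $\R_r$ whenever the test fires, and monotonicity of probability then closes the bound.

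First I would fix the probability contour. Since $(x_k^i,x_k^j)$ are jointly Gaussian under $\expBelief$ and $\proj{\W}$ is linear, $\xeuc_k^{ij}\sim\N(\hat{\xeuc}^{ij},\sigma^{ij})$ as already established, so the squared Mahalanobis distance $(\xeuc-\hat{\xeuc}^{ij})^T(\sigma^{ij})^{-1}(\xeuc-\hat{\xeuc}^{ij})$ is $\chi^2$-distributed with degrees of freedom equal to $\dim(\xeuc_k^{ij})=w$ (the displayed algorithm instantiates $w=2$). Hence the ellipsoid $\mathcal{E}_\alpha=\{\xeuc:(\xeuc-\hat{\xeuc}^{ij})^T(\sigma^{ij})^{-1}(\xeuc-\hat{\xeuc}^{ij})\leq\alpha\}$ carries exactly $1-p_{rob}$ probability mass when $\alpha=\textsc{inv}\chi^2(p_{rob},w)$ is read with the upper-tail convention, matching line~2.

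Next I would bound the ellipsoid by a ball and translate to probabilities. Using the spectral decomposition $\sigma^{ij}=\sum_\ell\lambda_\ell q_\ell q_\ell^T$, any $\xeuc\in\mathcal{E}_\alpha$ satisfies $\|\xeuc-\hat{\xeuc}^{ij}\|^2\leq\lambda_{max}\sum_\ell(\xeuc-\hat{\xeuc}^{ij})^Tq_\ell q_\ell^T(\xeuc-\hat{\xeuc}^{ij})/\lambda_\ell\leq\alpha\lambda_{max}$, so $\mathcal{E}_\alpha$ is contained in the ball $\Scal_{p_{rob}}$ centered at $\hat{\xeuc}^{ij}$ of radius $r_{rob}=\sqrt{\alpha\lambda_{max}}$ (line~3). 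Monotonicity gives $P(\xeuc_k^{ij}\in\Scal_{p_{rob}})\geq P(\xeuc_k^{ij}\in\mathcal{E}_\alpha)=1-p_{rob}$, hence $P(\xeuc_k^{ij}\in\tilde{\Scal}_{p_{rob}})\leq p_{rob}$ for the complement. Finally I would interpret the return condition geometrically: $\Scal_{p_{rob}}$ is a ball of radius $r_{rob}$ centered at $\hat{\xeuc}^{ij}$ and $\R_r$ is a ball of radius $r_i+r_j$ centered at the origin, and two balls are disjoint exactly when the center distance exceeds the sum of radii, i.e.\ $\|\hat{\xeuc}^{ij}\|>r_{rob}+r_i+r_j$, which is precisely the test on line~4. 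Thus \texttt{True} forces $\Scal_{p_{rob}}\cap\R_r=\emptyset$, so $\R_r\subseteq\tilde{\Scal}_{p_{rob}}$, and monotonicity yields $P^{ij_k}_{coll}=P(\xeuc_k^{ij}\in\R_r)\leq P(\xeuc_k^{ij}\in\tilde{\Scal}_{p_{rob}})\leq p_{rob}$, as claimed.

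The main obstacle is the contour bookkeeping rather than the geometry: one must correctly match the $\chi^2$ degrees of freedom to the workspace dimension $w$ and use the upper-tail inverse so that $\mathcal{E}_\alpha$ genuinely holds $1-p_{rob}$ mass; the ellipsoid-in-ball step via $\lambda_{max}$ and the two-ball separation test are elementary. It is worth noting the guarantee is one-directional (sufficient, not necessary): conservativeness is introduced both by over-approximating the ellipsoid with a ball and by the fact that a disjoint-but-nearby $\R_r$ may still receive strictly less than $p_{rob}$ mass, which is exactly why the algorithm may return \texttt{False} on some safe configurations.
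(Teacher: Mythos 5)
Your proposal is correct and follows essentially the same route as the paper's proof: bound the $1-p_{rob}$ probability contour of $\expBelief(\xeuc_k^{ij})$ by the sphere $\Scal_{p_{rob}}$, observe that the test on line~4 is exactly disjointness of $\Scal_{p_{rob}}$ and $\R_r$, and conclude $P^{ij_k}_{coll}\leq P(\xeuc_k^{ij}\notin\Scal_{p_{rob}})\leq p_{rob}$ by inclusion and monotonicity. You simply make explicit the ellipsoid-to-ball step via $\lambda_{max}$ and the $\chi^2$ quantile bookkeeping, which the paper states in the prose preceding the theorem rather than inside the proof itself.
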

\begin{proof}
Under the definition of the contour $\Scal_{p_{rob}}$ it holds that $P(\xeuc^{ij}_k\in\Scal_{p_{rob}})\geq 1-p_{rob}$, and conversely $P(\xeuc^{ij}_k\notin\Scal_{p_{rob}})< p_{rob}$. It follows that if the sphere $\R_r$ containing all possible robot-robot collision states does not intersect with $\Scal_{p_{rob}}$, then it is subsumed by the excluded set, and therfore $P^{ij_k}_{coll}=P(\xeuc^{ij}_k\in\R_r)<P(\xeuc^{ij}_k\notin\Scal_{p_{rob}})< p_{rob}$.
\end{proof}

\subsection{CL Condition}
Using the same variable $\xeuc^{ij}_k$ as the prior section, we define a sphere centered on the origin containing all states with exteroceptive measurements enabled such that $\R_{ext} = \{\xeuc_k^{ij} \mid \|\xeuc_k^i-\xeuc_k^j\|\leq r_{ext}\}$.
As with the prior section, we find the probability ellipsoid on $\expBelief(\xeuc_k^{ij})$ containing $1-p_{\neg CL}$ probability mass and bound it with sphere $\Scal_{p_{\neg CL}}$ such that $P(\xeuc^{ij}_k\in\Scal_{p_{\neg CL}})\geq 1-p_{\neg CL}$. If the sphere $\R_{ext}$ subsumes $\Scal_{p_{\neg CL}}$, then we can conclude that $P(\| \xeuc_i - \xeuc_j \| > r_{ext})<p_{\neg CL}$. This validation procedure is presented in Alg.~\ref{alg:ExtEnabled}.

\vspace{-3mm}
\begin{algorithm}
\KwIn{$\expBelief^{ij} = \N(\hat{\xeuc}^{ij},\sigma^{ij})$, $r_{ext}$}
$\lambda_{max}\leftarrow \textsc{MaxEigenvalue}(\sigma^{ij})$\;
$\alpha \leftarrow \textsc{inv}\chi^2(p_{rob},2)$\;
$r_{prob}\leftarrow \sqrt{\alpha\lambda_{max}}$\;
\If{$\|\hat{\xeuc}^{ij}\|+r_{prob}<r_{ext}$}
{
    \Return True\;
}
\Return False
\caption{$\textsc{ExtEnabled}$}
\label{alg:ExtEnabled}
\end{algorithm}
\vspace{-4mm}

\begin{theorem}
\label{thm:CL condition}
The validity checking Alg.~\ref{alg:ExtEnabled} guarantees satisfaction of the CL constraint $P_{\neg CL}^{ij_k} \leq p_{\neg CL}$ if it returns True.
\end{theorem}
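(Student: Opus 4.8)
The plan is to mirror the proof of Theorem~\ref{thm:Robot-robot collision}, but with the geometric containment reversed: here the enabling region must \emph{subsume} the probability contour rather than be disjoint from it. The observation that makes the argument go through is that, although the distance $r^{ij}_k$ is non-Gaussian, the CL-failure event coincides exactly with an event on the \emph{Gaussian} variable $\xeuc^{ij}_k$, namely $\{r^{ij}_k > r_{ext}\} = \{\|\xeuc^{ij}_k\| > r_{ext}\} = \{\xeuc^{ij}_k \notin \R_{ext}\}$, since $r^{ij}_k = \|\xeuc^{ij}_k\|$ and $\R_{ext}$ is the origin-centered ball of radius $r_{ext}$. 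Thus all probabilistic reasoning is carried out on $\expBelief(\xeuc^{ij}_k) = \N(\hat{\xeuc}^{ij}_k, \sigma^{ij}_k)$.

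First I would record the contour property: by construction the sphere $\Scal_{p_{\neg CL}}$, centered at $\hat{\xeuc}^{ij}_k$ with radius $r_{prob} = \sqrt{\alpha\lambda_{max}}$ (the inverse-$\chi^2$ scaling $\alpha$ being chosen so the ellipsoid encloses $1-p_{\neg CL}$ mass, and $\lambda_{max}$ the worst-case semi-axis), bounds that ellipsoid, so $P(\xeuc^{ij}_k \in \Scal_{p_{\neg CL}}) \geq 1 - p_{\neg CL}$. Second I would show the algorithm's return condition $\|\hat{\xeuc}^{ij}_k\| + r_{prob} < r_{ext}$ certifies the containment $\Scal_{p_{\neg CL}} \subseteq \R_{ext}$: for any $z \in \Scal_{p_{\neg CL}}$ the triangle inequality gives $\|z\| \leq \|\hat{\xeuc}^{ij}_k\| + \|z - \hat{\xeuc}^{ij}_k\| \leq \|\hat{\xeuc}^{ij}_k\| + r_{prob} < r_{ext}$, hence $z \in \R_{ext}$.

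Finally I would chain the two facts by monotonicity of probability: $P(\xeuc^{ij}_k \in \R_{ext}) \geq P(\xeuc^{ij}_k \in \Scal_{p_{\neg CL}}) \geq 1 - p_{\neg CL}$, so taking complements and applying the event equivalence, $P^{ij_k}_{\neg CL} = P(r^{ij}_k > r_{ext}) = P(\xeuc^{ij}_k \notin \R_{ext}) \leq p_{\neg CL}$. In the remaining branch of \eqref{eq:CL probability}, where $C_k^{ij,ext} \not\sqsubset \check{C}_k$, the bound is immediate since $P^{ij_k}_{\neg CL} = 0 \leq p_{\neg CL}$. The one genuinely non-routine point is the first step: reducing the non-Gaussian distance constraint to a Gaussian set-membership statement, getting the event equivalence exactly right, and recognizing that bounding the ellipsoid \emph{outward} by a sphere is conservative in the subsumption direction (a sphere lying inside $\R_{ext}$ forces the smaller-volume enclosed ellipsoid, carrying at least $1-p_{\neg CL}$ mass, to lie inside $\R_{ext}$ as well). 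After that, the containment and monotonicity steps are routine.
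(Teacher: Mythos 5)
Your proof is correct and follows essentially the same argument as the paper's: bound the $1-p_{\neg CL}$ probability ellipsoid of the Gaussian variable $\xeuc^{ij}_k$ by the sphere $\Scal_{p_{\neg CL}}$, show the algorithm's check forces $\Scal_{p_{\neg CL}} \subseteq \R_{ext}$, and conclude by monotonicity on complements. You additionally spell out two steps the paper leaves implicit — the triangle-inequality verification that $\|\hat{\xeuc}^{ij}_k\| + r_{prob} < r_{ext}$ implies the subsumption, and the trivial branch of \eqref{eq:CL probability} where no exteroceptive measurement is scheduled — but the route is the same.
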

\begin{proof}
Under the definition of the contour $\Scal_{p_{\neg CL}}$ it follows $P(\xeuc^{ij}_k\in\Scal_{p_{\neg CL}})\geq 1-p_{\neg CL}$, and conversely $P(\xeuc^{ij}_k\notin\Scal_{p_{\neg CL}})< p_{\neg CL}$. It follows that if the sphere $\R_{ext}$ containing all possible CL states subsumes $\Scal_{p_{\neg CL}}$, then the probability of not being in $\R_{ext}$ (no CL) is: $P_{\neg CL}^{ij_k}=P(\xeuc^{ij}_k\notin\R_{ext})<P(\xeuc^{ij}_k\notin\Scal_{p_{\neg CL}})< p_{\neg CL}$.
\end{proof}

\section{Biasing For CL}
\label{sec:Biasing for CL}
To encourage cooperative behaviors in our system, 
we preferentially select nodes 
with robot states where CL is enabled, i.e. nodes where the robots are close together. Three proposed biasing methods are detailed in this section. 
Note that 
biasing necessarily encourages unsafe behavior by increasing the likelihood of robot-robot collisions, thus raising
an interesting tension: 
more effective CL biasing will limit tree growth due to safety violation. 
This makes 
biasing effects difficult to predict, with inconsistencies dependent on the system and environment (as seen in Sec. \ref{sec:Evaluations}).

\begin{figure*}[ht!]
    \centering
    \begin{subfigure}[b]{0.3\textwidth}
        \centering
        \fontsize{5}{5}\selectfont 
        \includegraphics[width=\textwidth,trim={1.4cm 1.2cm 1cm 1.2cm}, clip]{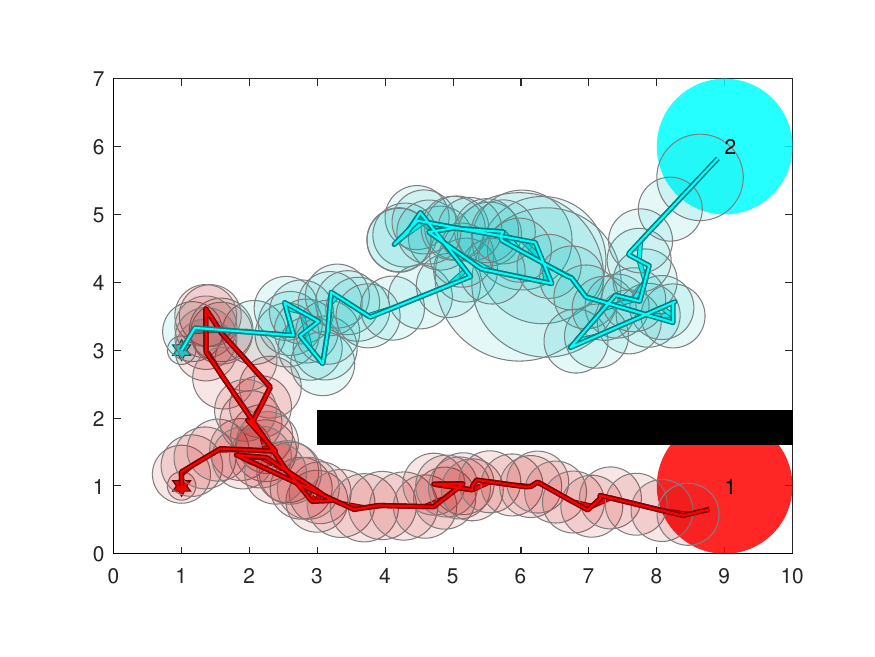}
        \caption{\small Corridor}
        \label{fig:Corridor Environemnt}
    \end{subfigure} \hfill
    \begin{subfigure}[b]{0.17\textwidth}
        \centering
        \fontsize{5}{5}\selectfont 
        \includegraphics[width=\textwidth,trim={3.7cm 0.7cm 3.5cm 0.7cm}, clip]{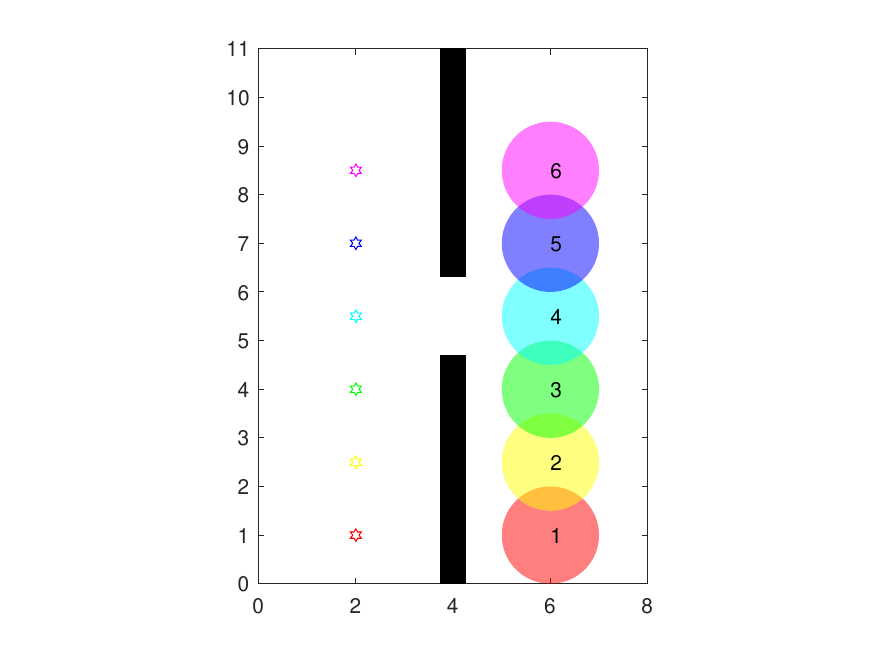}
        \caption{\small Pincer}
        \label{fig:Pincer Environemnt}
    \end{subfigure}\hfill
    \begin{subfigure}[b]{0.22\textwidth}
        \centering
        \fontsize{5}{5}\selectfont 
        \includegraphics[width=\textwidth,trim={2.5cm 0.7cm 2.2cm 0.7cm}, clip]{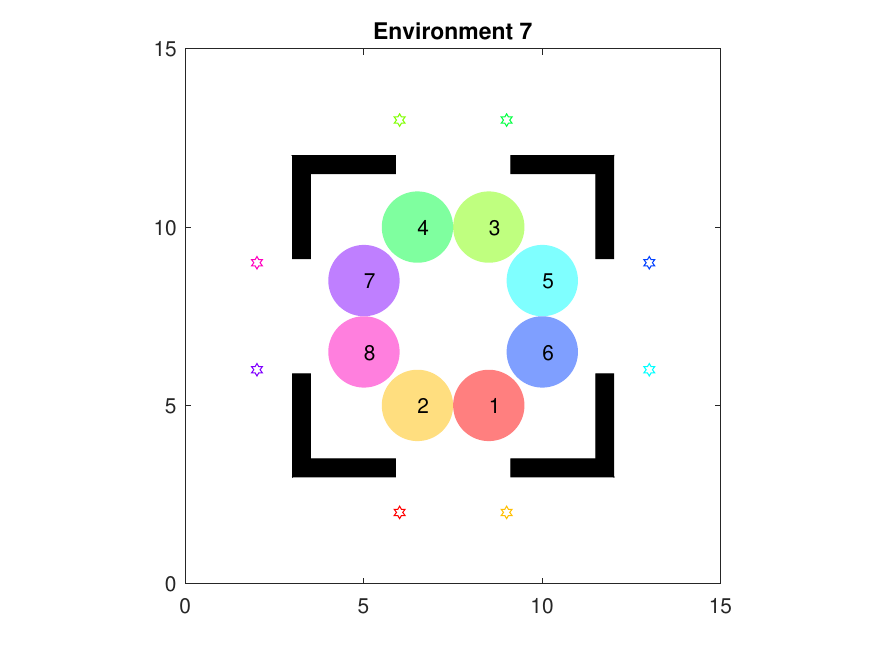}
        \caption{\small Hive}
        \label{fig:Hive Environemnt}
    \end{subfigure}\hfill
    \begin{subfigure}[b]{0.22\textwidth}
        \centering
        \fontsize{5}{5}\selectfont 
        \includegraphics[width=\textwidth,trim={2.5cm 0.7cm 2.2cm 0.7cm}, clip]{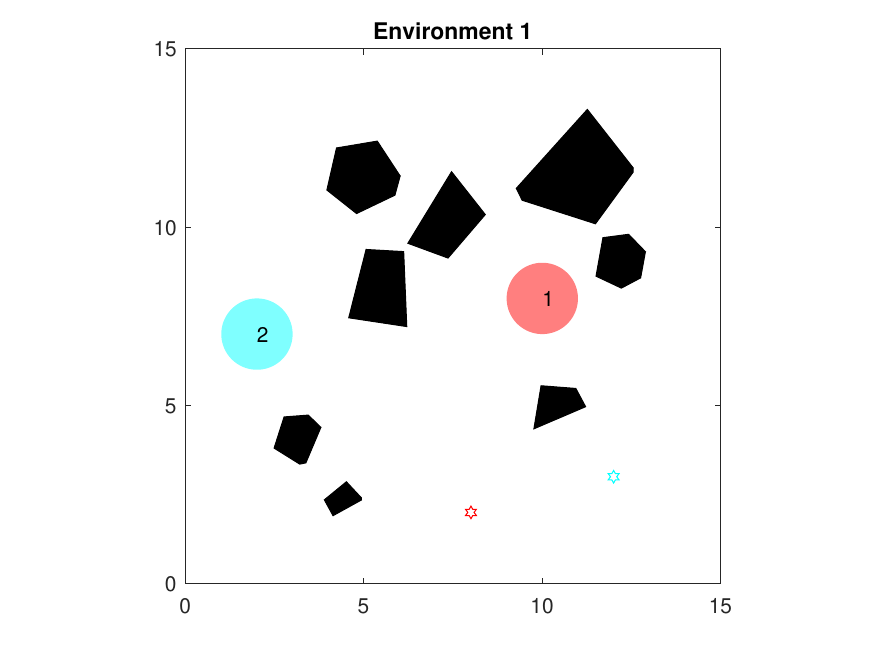}
        \caption{\small Random}
        \label{fig:Random Environemnt}
    \end{subfigure}
    \caption{\small Test Environments}
    \label{fig:envs}
    \vspace{-5mm}
\end{figure*}

\subsection{State Cloning}
\label{sec:cloning}
The first method is implemented in RRT node selection (Alg. \ref{alg:SelectBelief-RRT}) with the $\textsc{BiasedSampleBelief}$ function. The native selection scheme of RRT (sampling then finding the nearest neighbor) is not well suited to bias toward nodes that contain close robots because it relies on the distance \emph{between} nodes, not the distance \emph{within} nodes. Our biased sampler forms sampled beliefs where all robots occupy the same state.
We 
call this `Cloning'. 
Begin by sampling the composed mean, $\hat{X}_{samp}$, and covariance, $\Sigma_{samp}$, as usual. 
Then choose a robot to clone $i_{clone}\in\{1,...,N_A\}$, with projected state $\hat{\xeuc}_k^{i_{clone}} = \proj{\W}(\hat{x}^{i_{clone}})$. Our implementation alternates among all robots, but other methods, e.g. at random, are possible. Then, iterate over all robots replacing their projected state means with the cloned robot's. This gives a composed mean vector with all robots 
at the same projected state. 


This 
is the most computationally efficient and scalable of the proposed biasing methods, being independent of tree size. This allows the planner 
to achieve roughly the same number of iterations within a fixed 
time regardless of 
$\epsilon$. In contrast, the other two methods have a distinct trade-off as increased $\epsilon$ limits tree growth due to computational complexity.

\subsection{Distance Weighting}
\label{sec:weight}
The second biasing method is implemented in EST node selection (Alg.~\ref{alg:SelectBelief-EST}) with the  $\textsc{BiasedPDFsample}$ function. Unlike indirect sampling and nearest neighbor selection in RRT, the likelihood of sampling a given node in EST is determined only by the node's weight. We can therefore directly sample according to the distance between robots within each node by maintaining a biasing pdf. 
For a node $\expBelief^n = \mathcal{N}(\hat{X}_n$, $\Sigma_n)$ with projected Euclidean state $\hat{\xeuc}^i=\proj{\W}(\hat{x}_n^i)$ for each robot, we define node weight $\mathcal{W}(\expBelief^n)$ as: 
\vspace{-2mm}
\begin{equation}
    \label{eq:distance weight}
    \mathcal{W}(\expBelief^n) = \frac{1}{\mathcal{D}(\expBelief^n)}, \quad \mathcal{D}(\expBelief^n) = \sum_{i=1}^{N_A}\sum_{j=1}^{N_A} \| \hat{\xeuc}_i - \hat{\xeuc}_j \|. 
    \vspace{-1mm}
\end{equation}
\noindent A proper PDF over all tree nodes is obtained by normalizing over the node weights. Note the double sum in $\mathcal{D}(\expBelief^n)$ scales poorly with $N_A$, which limits tree growth.

\subsection{Re-Branching}
\label{sec:Rebranching}
Finally, we propose a more complex biasing technique for both RRT and EST that modifies the selected node. This method reshuffles individual robot pairings to form new branches from existing branches; we call this 
`Re-branching'. This method slightly modifies the sampling-based planner framework, as described in Alg. \ref{alg: Sampling-Based Planner Rebranch}.

\vspace{-3mm}
\begin{algorithm}
\KwIn{$\X$, $\{\X_G^i\}$, $\W_O$, $N$, $\epsilon$}
\KwOut{$G$}
$G\leftarrow (\mathbb{V}\leftarrow \{\expBelief_0\}, \mathbb{E}\leftarrow\emptyset)$\;
\For{$N$ iterations}{
    $\expBelief_{select}\leftarrow\textsc{SelectBelief}()$\;
    $p\leftarrow \textsc{StandardUniformSample()}$\;
    \If{$p<\epsilon$}{
        $\expBelief_{select}\leftarrow \textsc{ReBranch}(b_{select})$\;
    }
    $(\expBelief_{new}, \overline{\expBelief_{select}\rightarrow \expBelief_{new}}) \leftarrow\textsc{ExtendBelief}()$\;
    \If{$\textsc{ValidBelief}(\expBelief_{new})$}{
        $\mathbb{V}\leftarrow\mathbb{V}\cup \{\expBelief_{new}\}$\;
        $\mathbb{E}\leftarrow\mathbb{E}\cup \{\overline{\expBelief_{select}\rightarrow \expBelief_{new}}\}$\;
    }
}
\Return $G$
\caption{Sampling-Based Planner, Re-branch}
\label{alg: Sampling-Based Planner Rebranch}
\end{algorithm}
\vspace{-3mm}

We start with the selected node $\expBelief^{sel}_{k^\dagger}$, defining a trajectory in belief space terminating at 
$\expBelief^{sel}_{k^\dagger}$ at time $k^\dagger$. A single target robot $i_{target}\in\{1,...,N_A\}$ is chosen, with projection of the mean from $\expBelief^{sel}_{k^\dagger}$ into Euclidean space $\hat{\xeuc}^{i_{target}}_{k^\dagger}$ (similar to Cloning). We then search the entire motion tree for the robot whose projected Euclidean state is closest to $\hat{\xeuc}^{i_{target}}_{k^\dagger}$ at time $k^\dagger$; this is the new paired robot $i_{pair}$ which corresponds to belief node $\expBelief^{pair}_{k^\dagger}$. We then form a new branch by replacing the nominal control edges for robot $i_{pair}$ in the original coupled trajectory for $\expBelief^{sel}_{k^\dagger}$ and re-propagating the trajectory. This results in the new coupled belief node $\expBelief^{re}_{k^\dagger}$ where robots $i_{target}$ and $i_{pair}$ have the closest possible Euclidean distances at time $k^\dagger$ of the existing tree states.

Re-branching is conceptually straightforward and offers promising results, particularly for smaller problems. However, its implementation presents challenges that impact scalability due to three key factors. First, it requires re-propagating and validating new branch edges. Second, it demands precise time synchronization of nodes to form new states, introducing additional states whenever the intermediate nodes of the selected and paired nodes differ in time. Third, we must search the tree over individual robot states rather than the \emph{coupled} states of the motion tree. We address this by maintaining nearest neighbor data structures over the projected states for each robot. While these factors introduce computational overhead for large search trees, our results demonstrate the potential of Re-branching to efficiently handle smaller-scale problems.

\section{Evaluations}
\label{sec:Evaluations}

We evaluate our algorithm across four environments shown in Fig.~\ref{fig:envs}
to assess performance across different planners and biasing techniques. Robot start locations (stars) and goal regions (shaded circles) are predefined.
We implemented the planners (Belief-RRT and -EST) in OMPL \cite{sucan2012the-open-motion-planning-library} and ran all experiments on an Intel Core i7-12700K CPU with 32GB~RAM. 


In all cases, each robot has 2D dynamics given by: $A_k^i = B_k^i = I_{2\times 2}$, $Q_k^i = 0.01I_{2\times 2}$. A subset of robots (corresponding to odd indices $i$) has access to proprioceptive measurements, making the system unobservable in the absence of exteroceptive measurements. Primary results are reported with two robots, scaling results are reported up to six robots, with additional results in the Appendix.
The 
measurement model is given by 
$C_k^{i,prop}=I_{2\times 2}$ and 
$C_k^{ij,ext}=[I_{2\times 2}, -I_{2\times 2}]$. The chance constraints are set to $p_{obs}=p_{rob}=p_{\neg CL}=0.05$.


\subsection{Illustrative Examples}
Figs.~\ref{fig:Random trajectory} and~\ref{fig:Corridor Environemnt} show two example trajectories for the Random and Corridor environments, respectively.
The cyan robot states are unobservable without CL, as shown in Fig.~\ref{fig:Random trajectory zoom}. 
In Fig.~\ref{fig:Random trajectory} the red robot must divert to enable CL so that the cyan robot 
reaches its goal. In Fig.~\ref{fig:Corridor Environemnt}, the cyan robot diverts to remain close to the red robot throughout its trajectory and keep its uncertainty small.
An online method that only drives toward the goal would fail to find these cooperative behaviors, and a decoupled approach that does not account for CL would fail to plan for the 
An online method that only drives toward the goal would fail to find these cooperative behaviors, and a decoupled approach that does not account for CL would fail to plan for the 
cyan robot. Our algorithm finds both plans within 2 minutes.





\begin{figure*}
    \centering
    \begin{subfigure}[b]{0.9\textwidth}
        \centering
        \includegraphics[width=\textwidth]{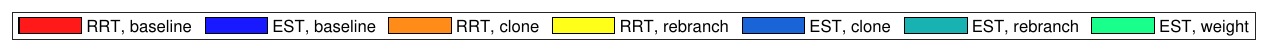}
    \end{subfigure}
    \newline
    \begin{subfigure}[b]{0.32\textwidth}
        \centering
        \includegraphics[width=\textwidth,trim={2.8cm 3cm 3.5cm 3.2cm}, clip]{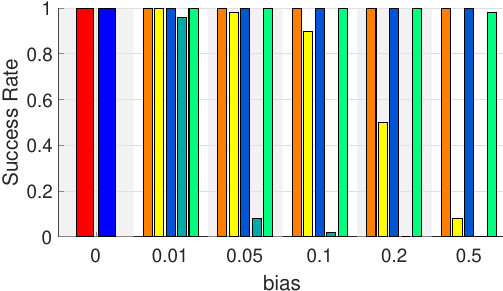}
        \caption{\small Corridor, Success Rate}
        \label{fig:Corridor success rate}
    \end{subfigure}
    \begin{subfigure}[b]{0.32\textwidth}
        \centering
        \includegraphics[width=\textwidth,trim={2.7cm 3cm 3.5cm 3.2cm}, clip]{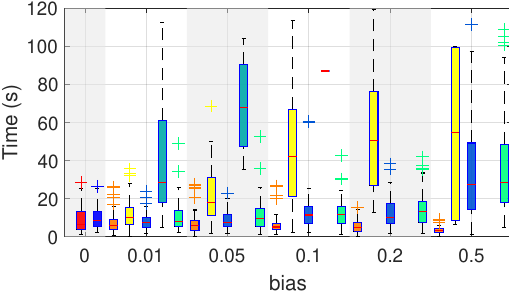}
        \caption{\small Corridor, Solution Time}
        \label{fig:Corridor solution time}
    \end{subfigure}
    \begin{subfigure}[b]{0.32\textwidth}
        \centering
        \includegraphics[width=\textwidth,trim={2.8cm 3cm 3.5cm 3cm}, clip]{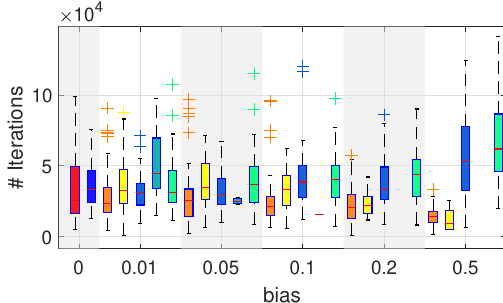}
        \caption{\small Corridor, Number Iterations}
        \label{fig:Corridor number iterations}
    \end{subfigure}
    \newline
    \begin{subfigure}[b]{0.32\textwidth}
        \centering
        \includegraphics[width=\textwidth,trim={2.8cm 3cm 3.5cm 3.2cm}, clip]{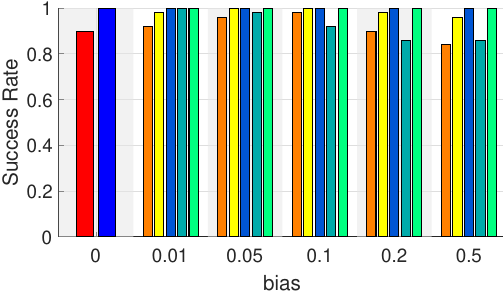}
        \caption{\small Hive, Success Rate}
        \label{fig:Hive2 success rate}
    \end{subfigure}
    \begin{subfigure}[b]{0.32\textwidth}
        \centering
        \includegraphics[width=\textwidth,trim={2.7cm 3cm 3.5cm 3.2cm}, clip]{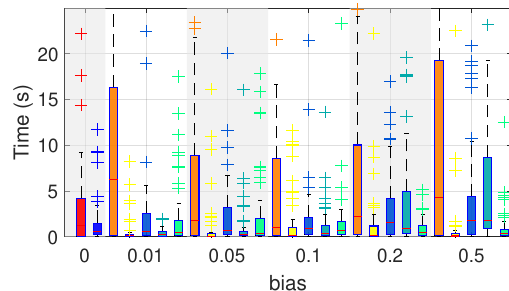}
        \caption{\small Hive, Solution Time}
        \label{fig:Hive2 solution time}
    \end{subfigure}
    \begin{subfigure}[b]{0.32\textwidth}
        \centering
        \includegraphics[width=\textwidth,trim={2.8cm 3cm 3.5cm 3cm}, clip]{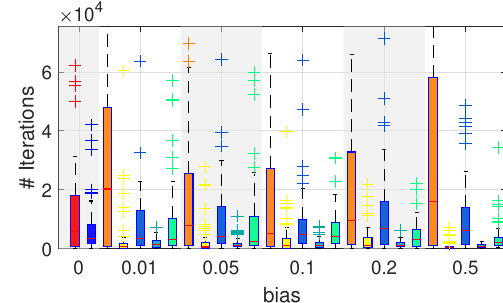}
        \caption{\small Hive, Number Iterations}
        \label{fig:Hive2 number iterations}
    \end{subfigure}
    \newline
    \begin{subfigure}[b]{0.32\textwidth}
        \centering
        \includegraphics[width=\textwidth,trim={2.8cm 3cm 3.5cm 3.2cm}, clip]{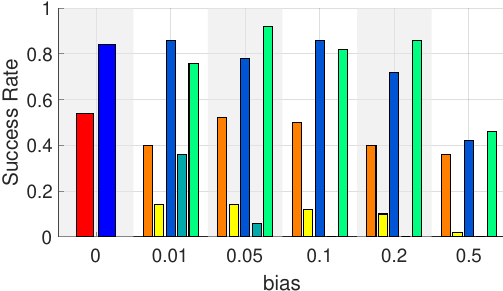}
        \caption{\small Random, Success Rate}
        \label{fig:Random success rate}
    \end{subfigure}
    \begin{subfigure}[b]{0.32\textwidth}
        \centering
        \includegraphics[width=\textwidth,trim={2.7cm 3cm 3.5cm 3.2cm}, clip]{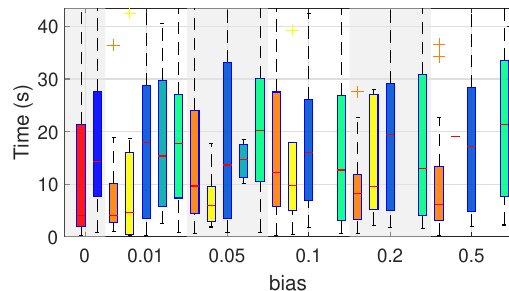}
        \caption{\small Random, Solution Time}
        \label{fig:Random solution time}
    \end{subfigure}
    \begin{subfigure}[b]{0.32\textwidth}
        \centering
        \includegraphics[width=\textwidth,trim={2.8cm 3cm 3.5cm 3cm}, clip]{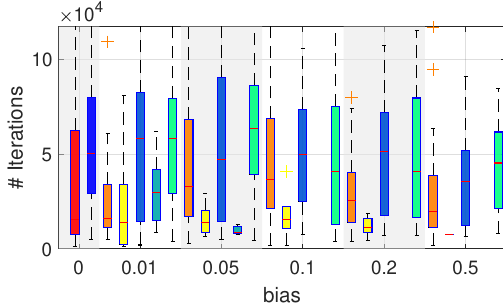}
        \caption{\small Random, Number Iterations}
        \label{fig:Random number iterations}
    \end{subfigure}
    \caption{\small Two Robot Environments}
    \label{fig:2 Robot Environments}
\end{figure*}

\subsection{Benchmarks}
We use benchmarks to compare Belief-$RRT$ and Belief-$EST$ planners with biasing rates ($\epsilon$) from $0.01$ to $0.5$, alongside a no-biasing baseline. Each instance is run $50$ times, with a planning time of 1 minute for `Hive' and 2 minutes for others. We report success rates, computation time, and iteration counts, summarized in Figs.~\ref{fig:2 Robot Environments} and~\ref{fig:4}, where all plots follow the same legend.

Overall, the results show that our proposed algorithm reliably finds solutions in each environment with both RRT and EST variants, and biasing techniques better suited to some over others. In particular, we observe that Re-branching can improve planning time and iterations in simple environments, and RRT with Cloning scales well with $N_A$. Detailed discussion is provided below. 


\textbf{A Note on the Extended Results: }
Our main results are summarized in Figs.~\ref{fig:2 Robot Environments} and \ref{fig:4}, however we include all our additional results in the Appendix. These include additional plots of the robot-robot collision rate, robot-obstacle collision rate, and tree size for the Random environment in Fig.~\ref{fig:Random_Results}, as well as the Hive environment for 2, 3, and 4 robots in Figs.~\ref{fig:Hive 2 Results}, \ref{fig:Hive 3 Results}, and \ref{fig:Hive 4 Results}. We include abbreviated results for 5 and 6 robots in the Hive environement, Figs.~\ref{fig:Hive5 Results} and \ref{fig:Hive6 Results}. We additionally tested the Hive environment with 7 and 8 robots, but the success rate was too low to draw any useful conclusions. We also include plots of the success rate, solution time, and number of iterations for the Pincer environment for 2, 3, 4, and 5 robots in Figs.~\ref{fig:Pincer2 Results}, ~\ref{fig:Pincer3 Results}, ~\ref{fig:Pincer4 Results}, and ~\ref{fig:Pincer5 Results}. We additionally tested the Pincer environment with 6 and 7 robots, but the success rate was too low to draw any useful conclusions.

\textbf{CL vs. Robot-robot collision: }
To illustrate the tension introduced between CL and robot-robot collisions, 
consider the boxplots of the proportion of nodes rejected for robot-obstacle collision (Fig.~\ref{fig:Random obs coll}) and robot-robot collision (Fig.~\ref{fig:Random robot coll}) for the Random environment. 
As $\epsilon$ increases, 
robots are drawn closer together, causing more robot-robot collisions. This is particularly true of 
Re-branching, 
indicating that (despite the high computation cost) it is the most effective CL biasing method. 
We can see further evidence of this in the simpler 2, 3, and 4-robot Hive environments, Figs.~\ref{fig:Hive 2 Results}, \ref{fig:Hive 3 Results}, \ref{fig:Hive 4 Results}, however the collision rates for the Hive are more robust to CL biasing compared to the Random environment, resulting in higher success rates for Re-branching.

\begin{figure*}
    \centering
    \begin{subfigure}[b]{0.32\textwidth}
        \centering
        \includegraphics[width=\textwidth,trim={2.8cm 3cm 3.5cm 3cm}, clip]{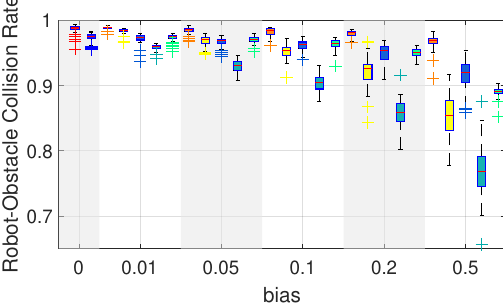}
        \caption{\small Random, Robot-Obstacle Collision Rate}
        \label{fig:Random obs coll}
    \end{subfigure}
    \begin{subfigure}[b]{0.32\textwidth}
        \centering
        \includegraphics[width=\textwidth,trim={2.8cm 3cm 3.5cm 3cm}, clip]{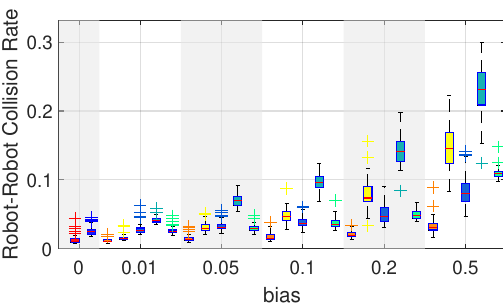}
        \caption{\small Random, Robot-Robot Collision Rate}
        \label{fig:Random robot coll}
    \end{subfigure}
    \begin{subfigure}[b]{0.32\textwidth}
        \includegraphics[width=\textwidth,trim={2.8cm 3cm 3.5cm 3cm}, clip]{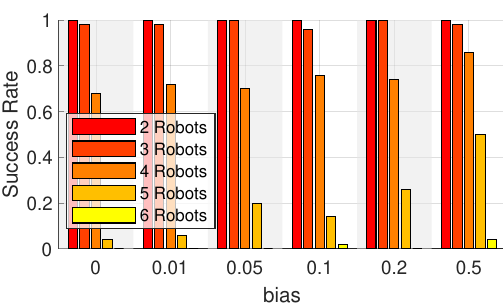}
        \caption{\small Pincer, RRT Cloning Method}
        \label{fig:Pincer RRT clone}
    \end{subfigure}
    \caption{\small Benchmarking results for (a)-(b) collision rates of 2 robots in Random Env., and (c) success rates for 2-6 robots in Pincer Env.}
    \label{fig:4}
    \vspace{-2mm}
\end{figure*}

\textbf{Re-branching:} 
Re-branching is an effective biasing technique in that it can efficiently find a solution for simple cases.
In particular, Re-branching in the Hive environment shows a substantial decrease in solution times (Fig.~ \ref{fig:Hive2 solution time}) and number of iterations (Fig.~\ref{fig:Hive2 number iterations}) from the baseline compared to all other techniques for the same success rate (Fig.~\ref{fig:Hive2 success rate}). However, because Re-branching is computationally intensive its performance can drop rapidly as more robots are added or for complex environments. We can see this in the 2-robot Pincer environment Fig.~\ref{fig:Pincer2 Results}, where the computation time for EST with Re-branching drastically increases and the success rate plummets. We can also see that Re-branching results in larger trees for RRT in the Random and 3-robot Hive environments, see Figs.~\ref{fig:Random_Results}, \ref{fig:Hive 3 Results}, which likely compounds the computation inefficiency and results in especially low success rates in those environments. This indicates that a more efficient re-branching method could be a promising future direction. 

\textbf{Cloning:} 
RRT with Cloning performs well in the Pincer environment for larger $N_A$, increasing the success rate from $0.86$ to $0.68$ for four robots, and $0.04$ to $0.5$ for five robots (Fig.~\ref{fig:Pincer RRT clone}). 
This indicates that the simplicity of the Cloning method can be quite effective for larger robot teams.
Also note that while EST generally does worse than RRT in the Pincer environment, for the three robot case EST with Cloning increases the success from from $0.76$ to $0.98$ while significantly decreasing the solution time and number of robots, as seen in Fig.~\ref{fig:Pincer3 Results} in the Appendix. However, all EST methods failed to find any plans for greater than three robots in the Pincer environment.


\textbf{Distance Weighting:}
The Distance Weighting technique has the most ambiguous effect on performance, with no obvious advantages anywhere in Fig.~\ref{fig:2 Robot Environments}.
The increased computation of Weighting likely outweighs any benefits of the biasing technique, especially with larger $N_A$.

However, in the Random environment it seems to provide modest improvement in success rate, and does at least as well as Cloning, see Figs.~\ref{fig:Random success rate}, \ref{fig:Random solution time}, \ref{fig:Random number iterations}. 
It has a similar effect for three and four robots in the Hive environment, see Figs.~\ref{fig:Hive 3 Results} and \ref{fig:Hive 4 Results} respectively in the Appendix.

This underperformance is unintuitive: Distance Weighting directly reasons over the distances between robots within a node, rather than indirectly biasing like the Cloning technique. It is possible that the increased computation of Weighting outweighs any benefits of the biasing technique, especially with larger numbers of robots
(this can be seen in the high computation times of Weighting vs Cloning especially in Fig.~\ref{fig:Pincer3 Results}).

\textbf{Planner Type: }
Finally, note that EST generally performs better in the Random and Hive environments, while RRT performs better in the Pincer environment. We were able to find plans for 5 and 6 robots in the Hive environemnt with EST, but not RRT, Figs.~\ref{fig:Hive5 Results}, \ref{fig:Hive6 Results}, albeit with low success rates. We were able to find plans for 4 and 5 robots in the Pincer environment with RRT, but not EST, Figs.~\ref{fig:Pincer4 Results}, \ref{fig:Pincer5 Results}. Both perform similarly well in the Corridor environment. This possibly due to the topology of the environments. RRT tends to pull tree growth to unexplored parts of the state space (via Voronoi biasing), while EST more exhaustively explores the space outward from the initial (via sparsity). In the Pincer environemnt RRT more successfully pulls the search tree through the passage and explores the goal side, whereas EST is slow to explore the entire space. On the other hand, the Hive and Random environments do not require extensive searches (especially the Hive), and therefore the exhaustive EST search more quickly finds satisfying plans. The Corridor environment is not particularly suited to either method of tree expansion.

\section{Conclusion}
\label{sec:Conclusion}
We consider the uncertain CL-MRMP problem and propose an algorithm that returns guaranteed safe plans to goal locations. Our proposed biasing techniques improve performance by encouraging exploration of cooperative behaviors, and we study their effectiveness in different scenarios. While our algorithm does not scale well beyond five robots, it reliably finds plans in scenarios that require CL where online methods would fail. Future work will investigate ways to decouple planning to improve scalability.

\bibliographystyle{plain}
\bibliography{references}

\begin{thebibliography}{10}

\bibitem{Alonso-Mora2019}
Javier Alonso-Mora, Eduardo Montijano, Tobias Nägeli, Otmar Hilliges, Mac Schwager, and Daniela Rus.
\newblock Distributed multi-robot formation control in dynamic environments.
\newblock {\em Autonomous Robots}, 43(5):1079--1100, 2019.

\bibitem{Bry2011_BeliefProp}
Adam Bry and Nicholas Roy.
\newblock Rapidly-exploring random belief trees for motion planning under uncertainty.
\newblock In {\em IEEE Int'l Conf. on Robotics and Automation}, pages 723--730, 2011.

\bibitem{Gao2020}
Chao Gao, Guorong Zhao, and Hassen Fourati, editors.
\newblock {\em Cooperative Localization and Navigation: Theory, Research, and Practice}.
\newblock Taylor \& Francis Group, LLC, Boca Rton, FL, 2020.

\bibitem{Guillet2014}
Audrey Guillet, Roland Lenain, Benoit Thuilot, and Philippe Martinet.
\newblock Adaptable robot formation control: Adaptive and predictive formation control of autonomous vehicles.
\newblock {\em IEEE Robotics \& Automation Magazine}, 21(1):28--39, 2014.

\bibitem{Hidaka2005}
Y.S. Hidaka, A.I. Mourikis, and S.I. Roumeliotis.
\newblock Optimal formations for cooperative localization of mobile robots.
\newblock In {\em in IEEE Int. Conf. on Robotics and Automation}, pages 4126--4131, 2005.

\bibitem{Ho2022_GBT}
Qi~Heng Ho, Zachary~N. Sunberg, and Morteza Lahijanian.
\newblock Gaussian belief trees for chance constrained asymptotically optimal motion planning.
\newblock In {\em Int. Conf. on Robotics and Automation}, 2022.

\bibitem{Hsu1997_EST}
D.~Hsu, J.-C. Latombe, and R.~Motwani.
\newblock Path planning in expansive configuration spaces.
\newblock In {\em Proceedings of International Conference on Robotics and Automation}, volume~3, pages 2719--2726 vol.3, 1997.

\bibitem{Kotting2022_KCBS}
Justin Kottinger, Shaull Almagor, and Morteza Lahijanian.
\newblock Conflict-based search for multi-robot motion planning with kinodynamic constraints.
\newblock In {\em IROS}, pages 13494--13499, 2022.

\bibitem{LaValle1998_RRT}
S.~Lavalle.
\newblock Rapidly-exploring random trees : a new tool for path planning.
\newblock {\em Research Report 9811}, 1998.

\bibitem{Leitner2009}
Jürgen Leitner.
\newblock Multi-robot cooperation in space: A survey.
\newblock In {\em Advanced Technologies for Enhanced Quality of Life}, pages 144--151, 2009.

\bibitem{Mokhtarzadeh2014_CoopInertialNav}
Hamid Mokhtarzadeh and Demoz Gebre-Egziabher.
\newblock Cooperative inertial navigation.
\newblock {\em NAVIGATION}, 61(2):77--94, 2014.

\bibitem{Patwardhan2023_DistMRMP}
Aalok Patwardhan, Riku Murai, and Andrew~J. Davison.
\newblock Distributing collaborative multi-robot planning with gaussian belief propagation.
\newblock {\em IEEE Robot. and Auto. Letters}, 8(2):552--559, 2023.

\bibitem{Pomares2023}
Jorge Pomares, Leonard Felicetti, and Damiano Varagnolo.
\newblock Editorial: Multi-robot systems for space applications.
\newblock {\em Frontiers in Robotics and AI}, 10, 2023.

\bibitem{Qu2021}
Junqi Qu, Xinguang Li, and Gongwu Sun.
\newblock Optimal formation configuration analysis for cooperative localization system of multi-auv.
\newblock {\em IEEE Access}, 9:90702--90714, 2021.

\bibitem{Roumeliotis2000}
Stergios~I. Roumeliotis.
\newblock {\em Robust mobile robot localization: From single-robot uncertainties to multi-robot interdependencies}.
\newblock PhD thesis, University of Southern California, 2000.

\bibitem{Russell2020_CLforUAV}
James~S. Russell, Mengbin Ye, Brian D.~O. Anderson, Hatem Hmam, and Peter Sarunic.
\newblock Cooperative localization of a gps-denied uav using direction-of-arrival measurements.
\newblock {\em IEEE Transactions on Aerospace and Electronic Systems}, 56(3):1966--1978, 2020.

\bibitem{sucan2012the-open-motion-planning-library}
Ioan~A. {\c{S}}ucan, Mark Moll, and Lydia~E. Kavraki.
\newblock The {O}pen {M}otion {P}lanning {L}ibrary.
\newblock {\em {IEEE} Robotics \& Automation Magazine}, 19(4):72--82, December 2012.
\newblock \url{https://ompl.kavrakilab.org}.

\bibitem{Theurkauf2024_CCKCBS}
Anne Theurkauf, Justin Kottinger, Nisar Ahmed, and Morteza Lahijanian.
\newblock Chance-constrained multi-robot motion planning under gaussian uncertainties.
\newblock {\em IEEE Robotics and Auto. Letters}, 9(1):835--842, 2024.

\bibitem{VanParys2016_OnlineDistMultiVehicle}
Ruben Van~Parys and Goele Pipeleers.
\newblock Online distributed motion planning for multi-vehicle systems.
\newblock In {\em European Control Conf. (ECC)}, pages 1580--1585, 2016.

\bibitem{Wagner2025_MRMP}
Glenn Wagner and Howie Choset.
\newblock Subdimensional expansion for multirobot path planning.
\newblock {\em Artificial Intelligence}, 219:1--24, 2015.

\end{thebibliography}

\appendix


    

\begin{figure*}[th!]
    \centering
    \begin{subfigure}[b]{0.9\textwidth}
        \centering
        \includegraphics[width=\textwidth]{figures/legend_crop.pdf}
    \end{subfigure}
    \begin{subfigure}[b]{0.32\textwidth}
        \centering
        \includegraphics[width=\textwidth,trim={0cm 0.7cm 0cm 0cm}, clip]{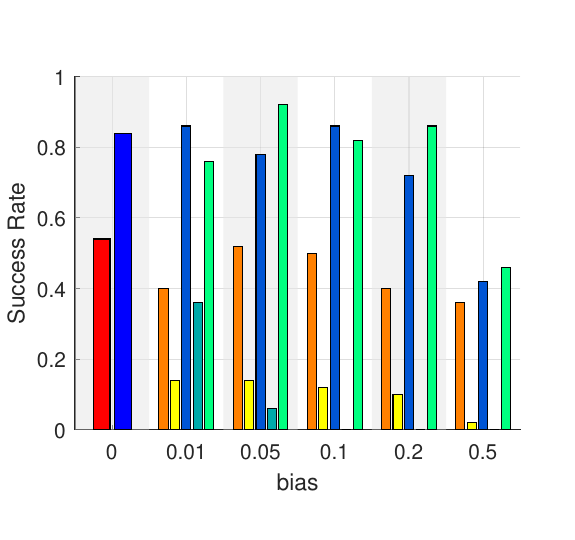}
    \end{subfigure}
    \begin{subfigure}[b]{0.32\textwidth}
        \centering
        \includegraphics[width=\textwidth]{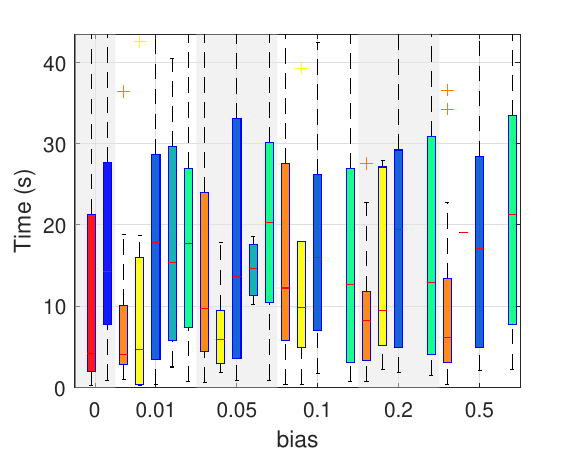}
    \end{subfigure}
    \begin{subfigure}[b]{0.32\textwidth}
        \fontsize{5}{5}\selectfont 
        \includegraphics[width=\textwidth]{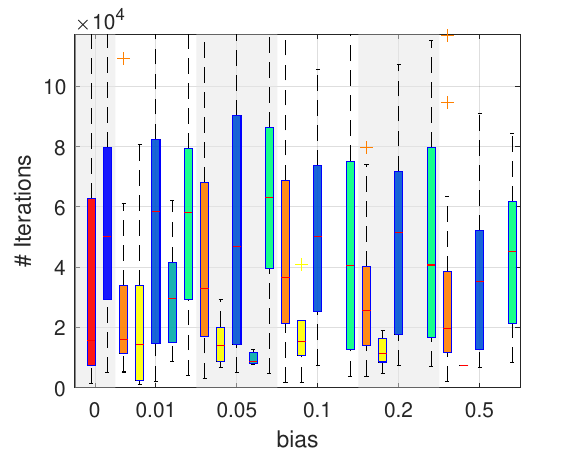}
    \end{subfigure}
    \newline
    \begin{subfigure}[b]{0.32\textwidth}
        \fontsize{5}{5}\selectfont 
        \includegraphics[width=\textwidth]{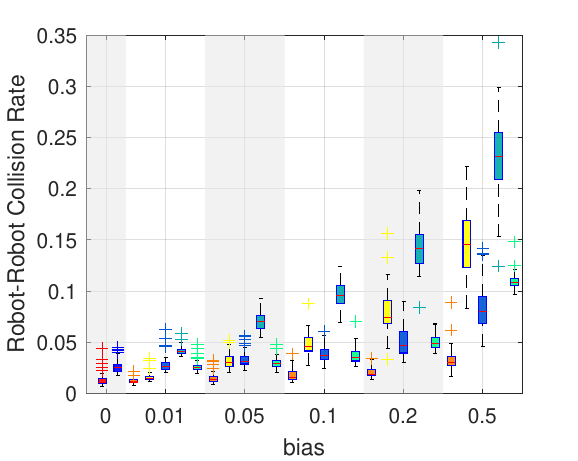}
    \end{subfigure}
    \begin{subfigure}[b]{0.32\textwidth}
        \fontsize{5}{5}\selectfont 
        \includegraphics[width=\textwidth]{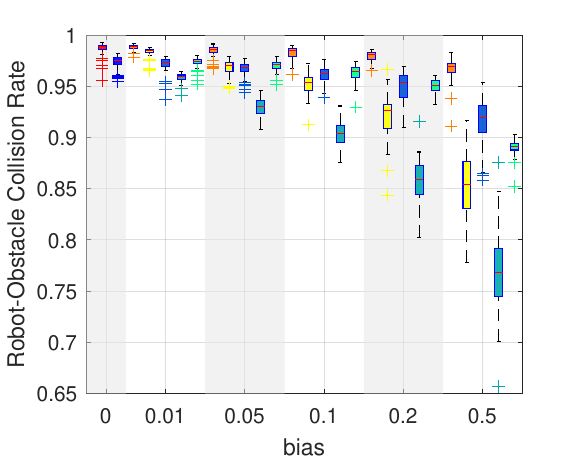}
    \end{subfigure}
    \begin{subfigure}[b]{0.32\textwidth}
        \fontsize{5}{5}\selectfont 
        \includegraphics[width=\textwidth]{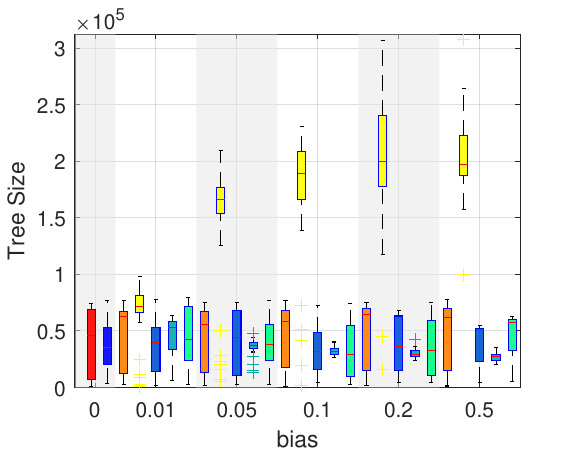}
    \end{subfigure}
    \caption{Random Results}
\label{fig:Random_Results}
\end{figure*}



\begin{figure*}
    \centering
    \begin{subfigure}[b]{0.32\textwidth}
        \centering
        \includegraphics[width=\textwidth,trim={0cm 0.7cm 0cm 0cm}, clip]{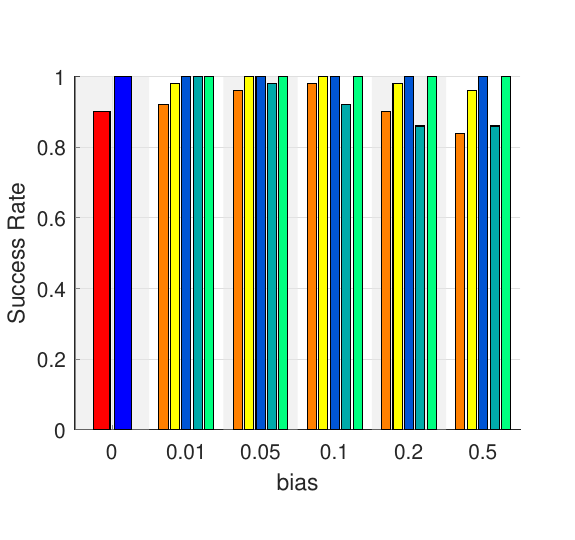}
    \end{subfigure}
    \begin{subfigure}[b]{0.32\textwidth}
        \centering
        \includegraphics[width=\textwidth]{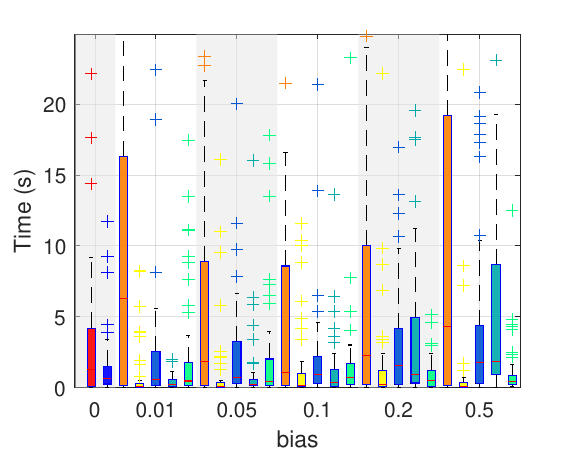}
    \end{subfigure}
    \begin{subfigure}[b]{0.32\textwidth}
        \fontsize{5}{5}\selectfont 
        \includegraphics[width=\textwidth]{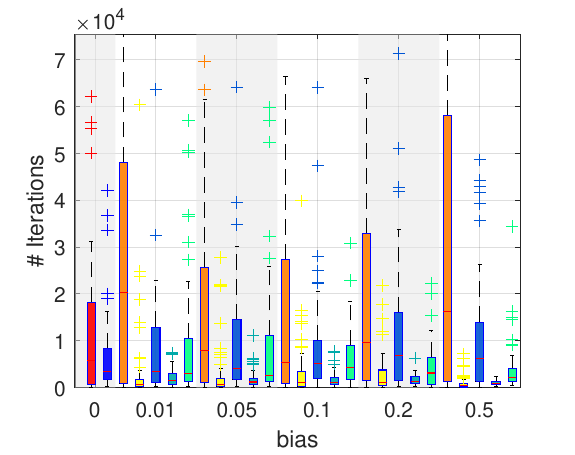}
    \end{subfigure}
    \begin{subfigure}[b]{0.32\textwidth}
        \fontsize{5}{5}\selectfont 
        \includegraphics[width=\textwidth]{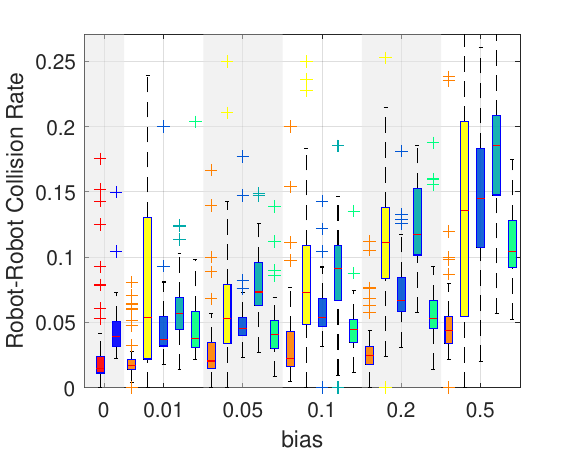}
    \end{subfigure}
    \begin{subfigure}[b]{0.32\textwidth}
        \fontsize{5}{5}\selectfont 
        \includegraphics[width=\textwidth]{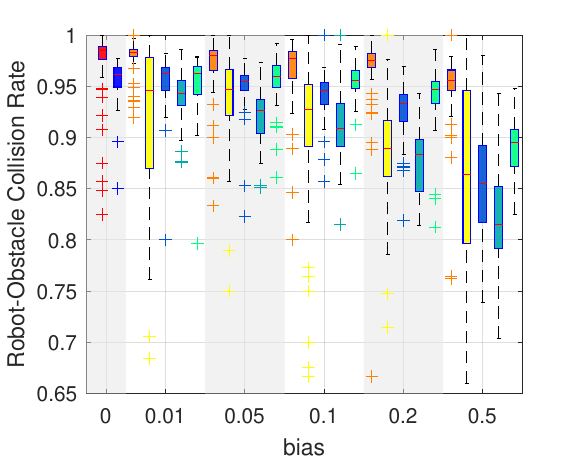}
    \end{subfigure}
    \begin{subfigure}[b]{0.32\textwidth}
        \fontsize{5}{5}\selectfont 
        \includegraphics[width=\textwidth]{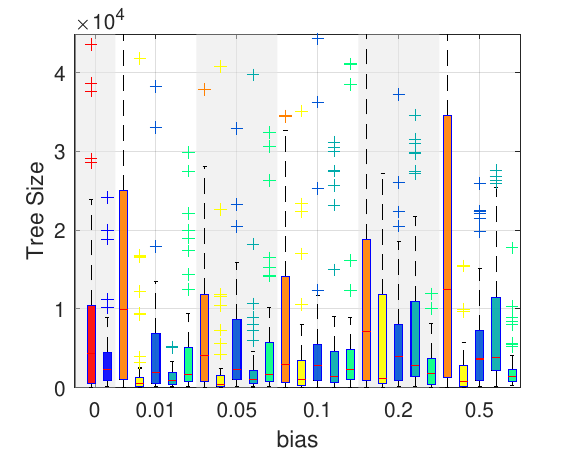}
    \end{subfigure}
    \caption{Hive 2 Results}
    \label{fig:Hive 2 Results}
\end{figure*}

\begin{figure*}
    \centering
    \begin{subfigure}[b]{0.9\textwidth}
        \centering
        \includegraphics[width=\textwidth]{figures/legend_crop.pdf}
    \end{subfigure}
    \begin{subfigure}[b]{0.32\textwidth}
        \centering
        \includegraphics[width=\textwidth,trim={0cm 0.7cm 0cm 0cm}, clip]{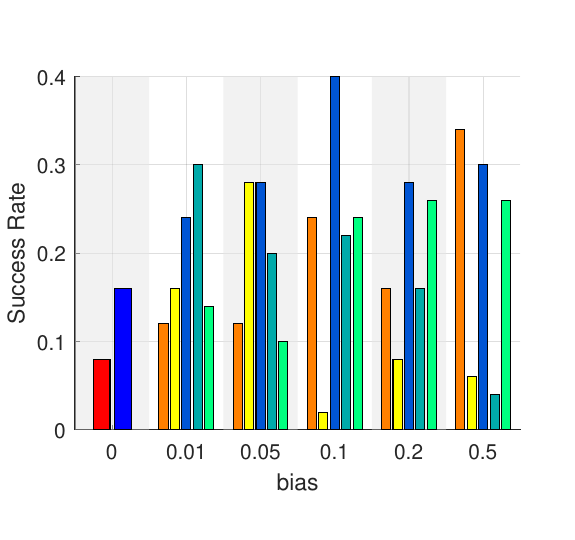}
    \end{subfigure}
    \begin{subfigure}[b]{0.32\textwidth}
        \centering
        \includegraphics[width=\textwidth]{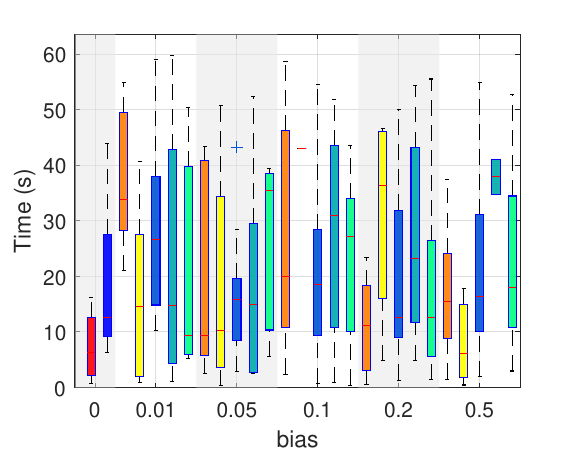}
    \end{subfigure}
    \begin{subfigure}[b]{0.32\textwidth}
        \fontsize{5}{5}\selectfont 
        \includegraphics[width=\textwidth]{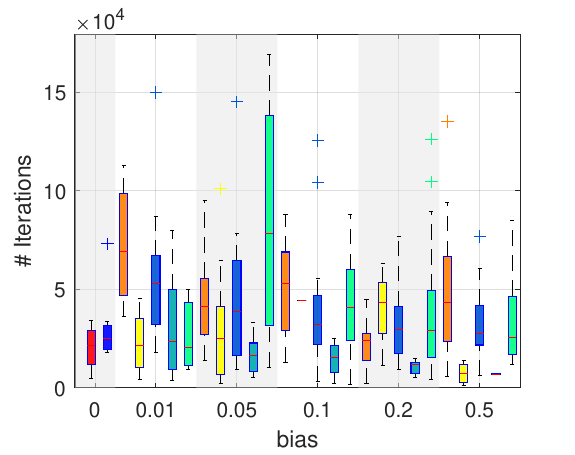}
    \end{subfigure}
    \begin{subfigure}[b]{0.32\textwidth}
        \fontsize{5}{5}\selectfont 
        \includegraphics[width=\textwidth]{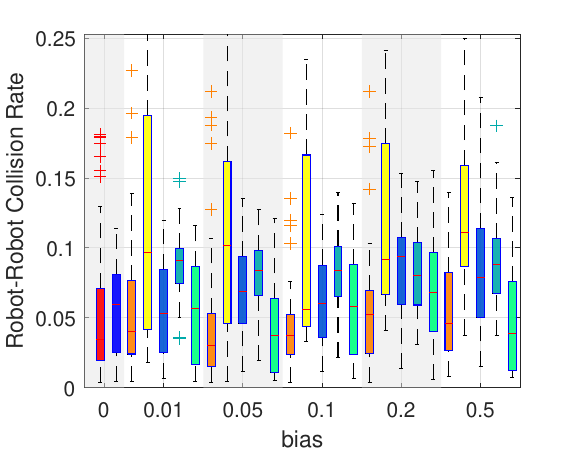}
    \end{subfigure}
    \begin{subfigure}[b]{0.32\textwidth}
        \fontsize{5}{5}\selectfont 
        \includegraphics[width=\textwidth]{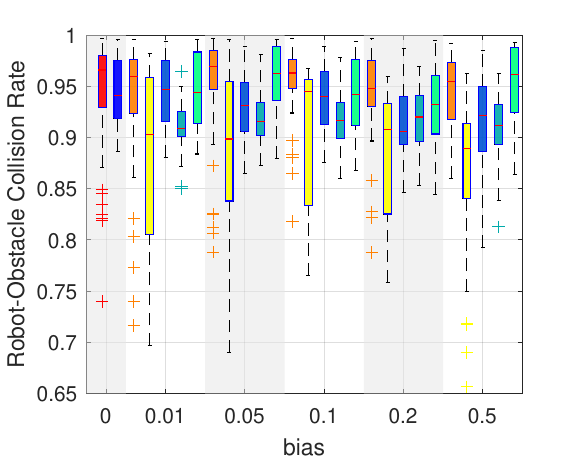}
    \end{subfigure}
    \begin{subfigure}[b]{0.32\textwidth}
        \fontsize{5}{5}\selectfont 
        \includegraphics[width=\textwidth]{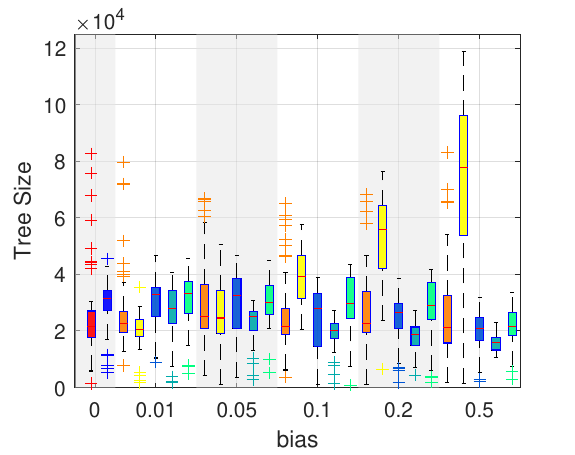}
    \end{subfigure}
    \caption{Hive 3 Results}
    \label{fig:Hive 3 Results}
\end{figure*}

\begin{figure*}
    \centering
    \begin{subfigure}[b]{0.32\textwidth}
        \centering
        \includegraphics[width=\textwidth,trim={0cm 0.7cm 0cm 0cm}, clip]{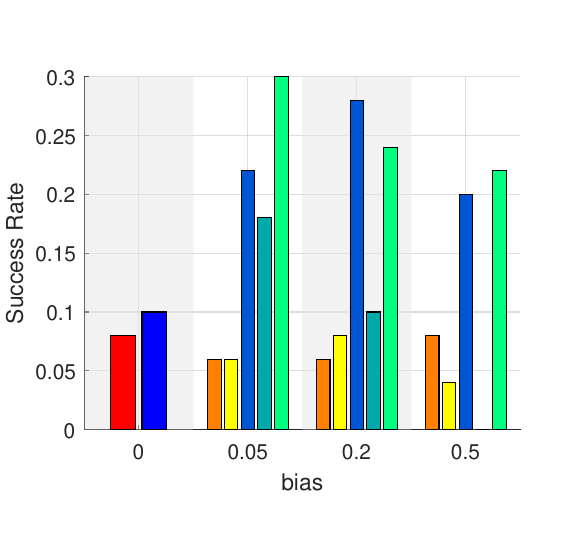}
    \end{subfigure}
    \begin{subfigure}[b]{0.32\textwidth}
        \centering
        \includegraphics[width=\textwidth]{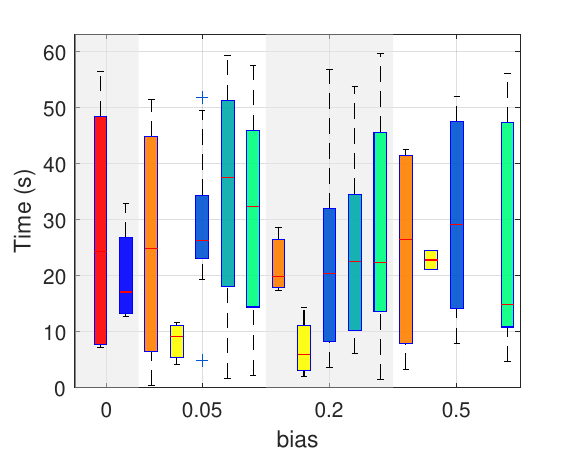}
    \end{subfigure}
    \begin{subfigure}[b]{0.32\textwidth}
        \fontsize{5}{5}\selectfont 
        \includegraphics[width=\textwidth]{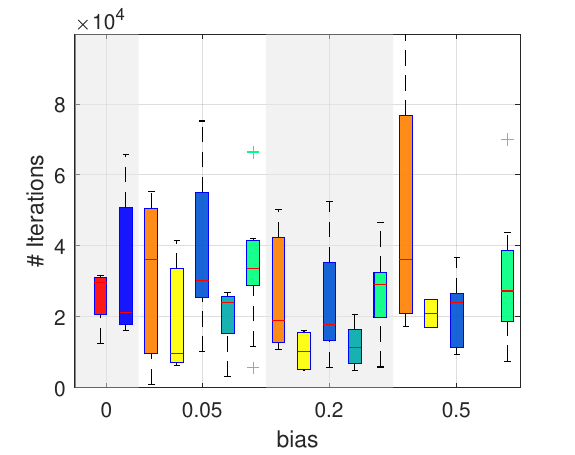}
    \end{subfigure}
    \begin{subfigure}[b]{0.32\textwidth}
        \fontsize{5}{5}\selectfont 
        \includegraphics[width=\textwidth]{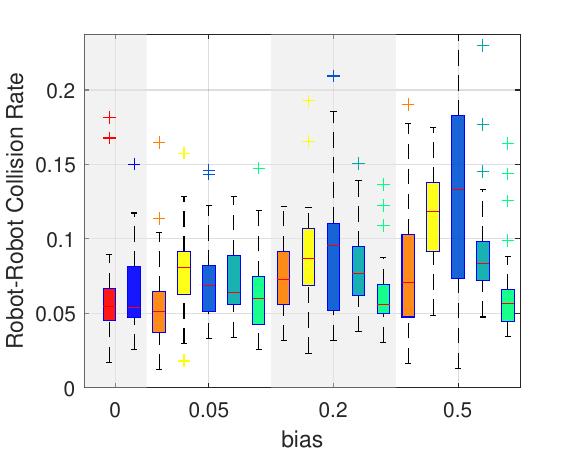}
    \end{subfigure}
    \begin{subfigure}[b]{0.32\textwidth}
        \fontsize{5}{5}\selectfont 
        \includegraphics[width=\textwidth]{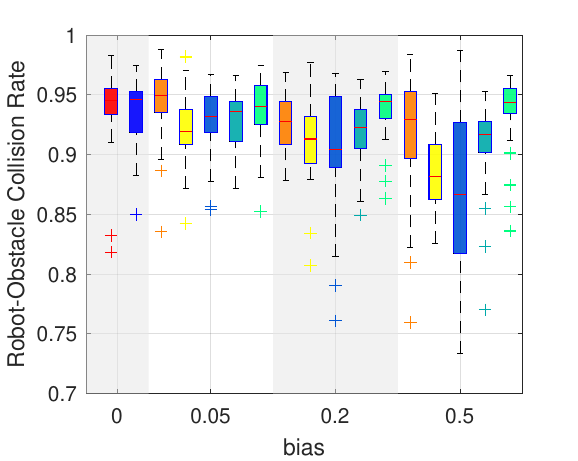}
    \end{subfigure}
    \begin{subfigure}[b]{0.32\textwidth}
        \fontsize{5}{5}\selectfont 
        \includegraphics[width=\textwidth]{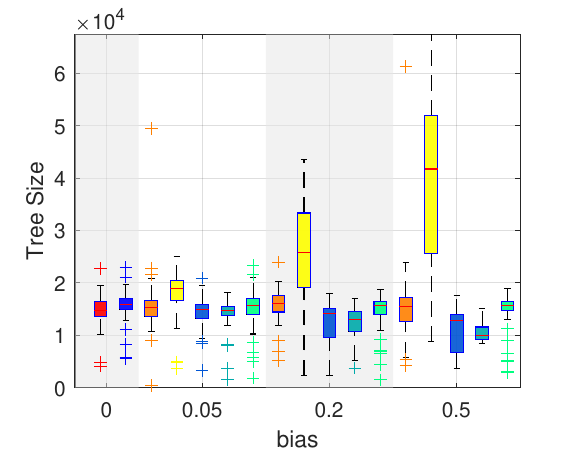}
    \end{subfigure}
    \caption{Hive 4 Results}
\label{fig:Hive 4 Results}
\end{figure*}

\begin{figure*}
    \centering
        \begin{subfigure}[b]{0.25\paperwidth}
        \centering
        \includegraphics[width=\textwidth,trim={0cm 0.7cm 0cm 0cm}, clip]{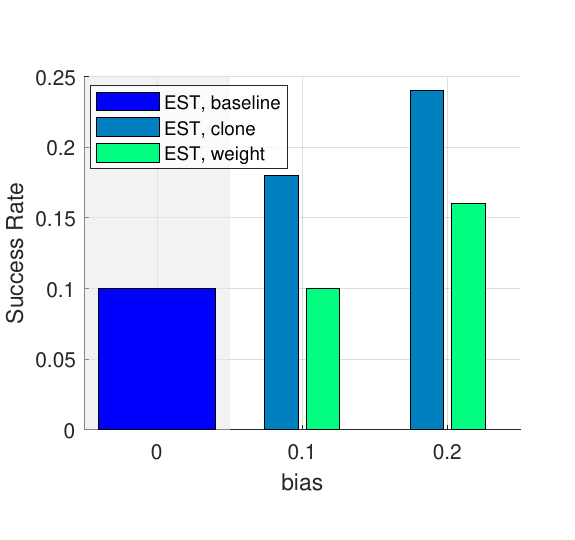}
    \end{subfigure}
    \begin{subfigure}[b]{0.3\textwidth}
        \centering
        \includegraphics[width=\textwidth]{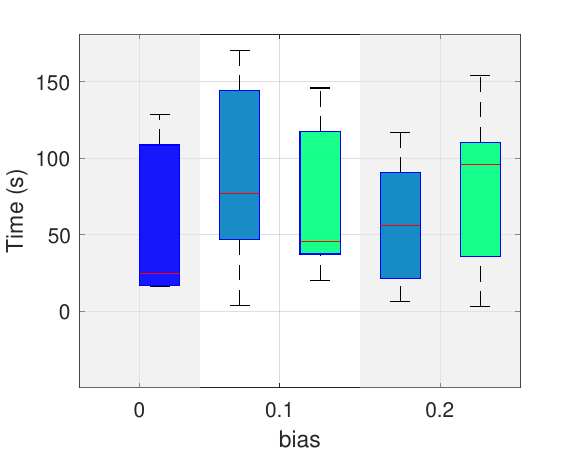}
    \end{subfigure}
    \begin{subfigure}[b]{0.25\paperwidth}
        \centering
        \includegraphics[width=\textwidth]{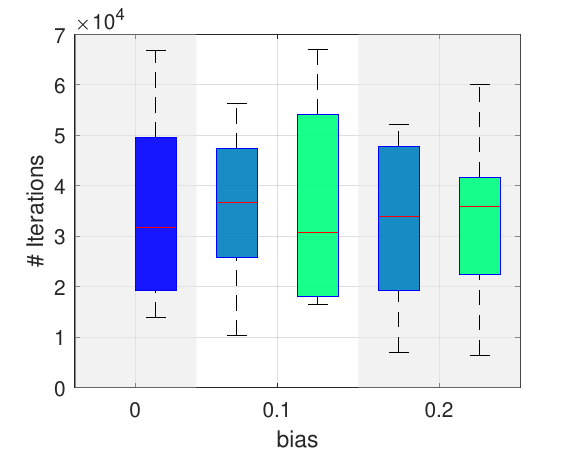}
    \end{subfigure}
    \caption{Hive 5 Results}
\label{fig:Hive5 Results}
\end{figure*}

\begin{figure*}
    \centering
    \begin{subfigure}[b]{0.25\paperwidth}
        \centering
        \includegraphics[width=\textwidth,trim={0cm 0.7cm 0cm 0cm}, clip]{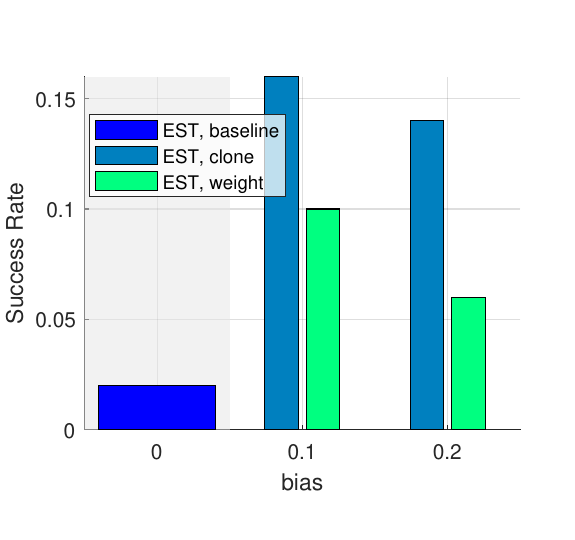}
    \end{subfigure}
    \begin{subfigure}[b]{0.3\textwidth}
        \centering
        \includegraphics[width=\textwidth]{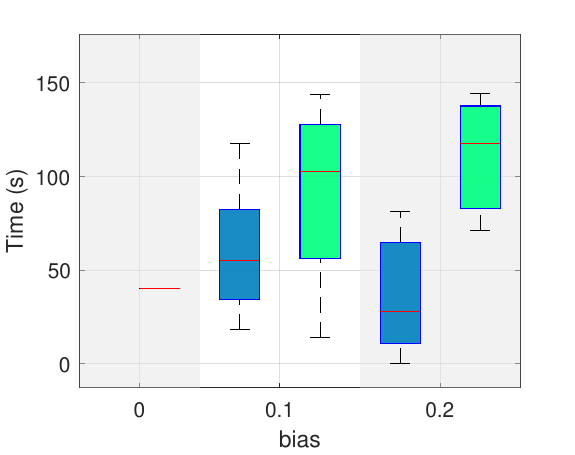}
    \end{subfigure}
    \begin{subfigure}[b]{0.25\paperwidth}
        \centering
        \includegraphics[width=\textwidth]{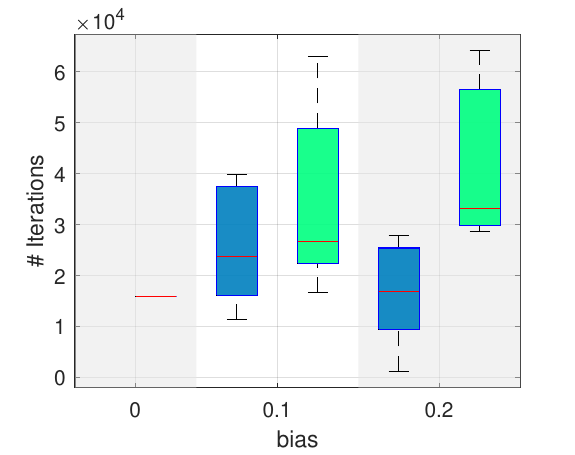}
    \end{subfigure}
    \caption{Hive 6 Results}
    \label{fig:Hive6 Results}
\end{figure*}

\clearpage
\begin{figure*}
    \centering
    \begin{subfigure}[b]{0.9\textwidth}
        \centering
        \includegraphics[width=\textwidth]{figures/legend_crop.pdf}
    \end{subfigure}
    \begin{subfigure}[b]{0.32\textwidth}
        \centering
        \fontsize{5}{5}\selectfont 
        \includegraphics[width=\textwidth,trim={0cm 0.7cm 0cm 0cm}, clip]{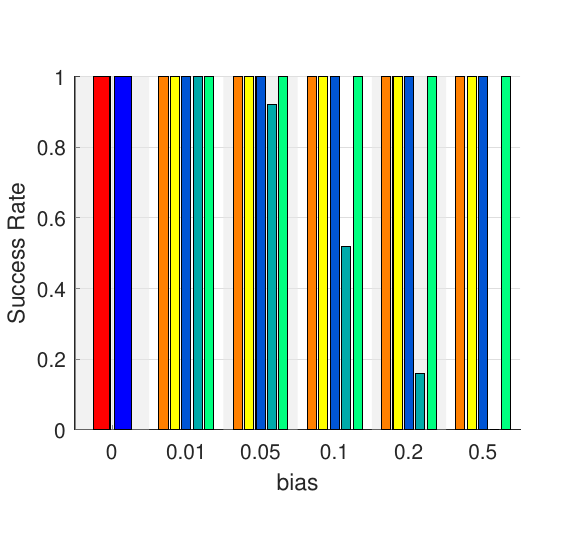}
    \end{subfigure}
    \begin{subfigure}[b]{0.32\textwidth}
        \centering
        \fontsize{5}{5}\selectfont 
        \includegraphics[width=\textwidth]{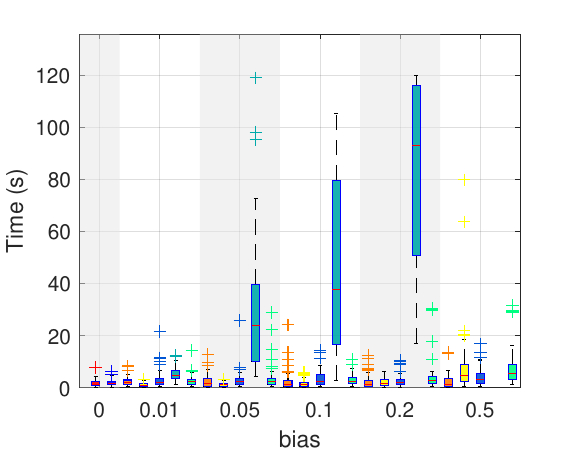}
    \end{subfigure}
    \begin{subfigure}[b]{0.32\textwidth}
        \centering
        \fontsize{5}{5}\selectfont 
        \includegraphics[width=\textwidth]{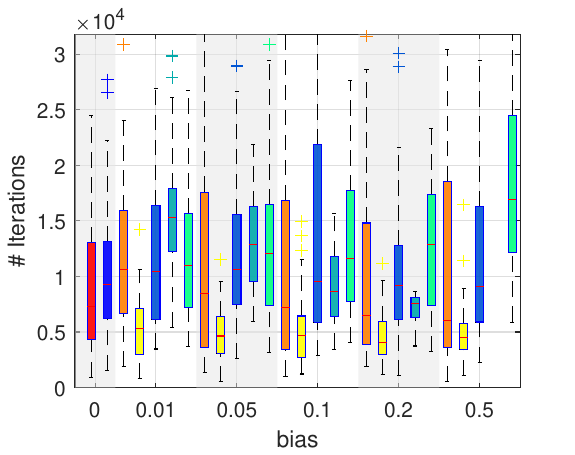}
    \end{subfigure}
    \caption{Pincer 2 Results}
\label{fig:Pincer2 Results}
\end{figure*}

\begin{figure*}
    \centering
    \begin{subfigure}[b]{0.32\textwidth}
        \centering
        \fontsize{5}{5}\selectfont 
        \includegraphics[width=\textwidth,trim={0cm 0.7cm 0cm 0cm}, clip]{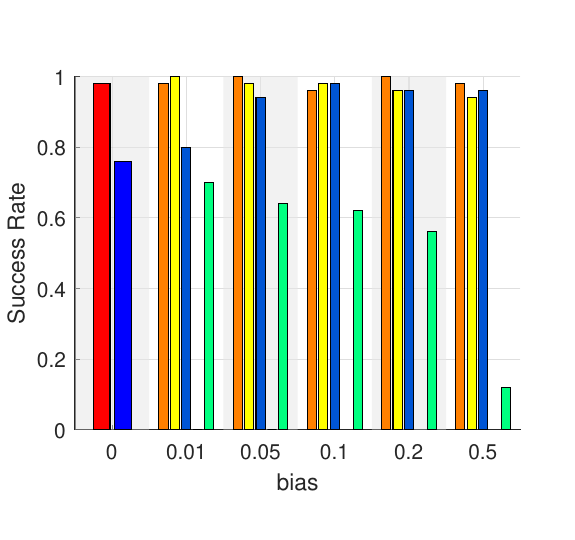}
    \end{subfigure}
    \begin{subfigure}[b]{0.32\textwidth}
        \centering
        \fontsize{5}{5}\selectfont 
        \includegraphics[width=\textwidth]{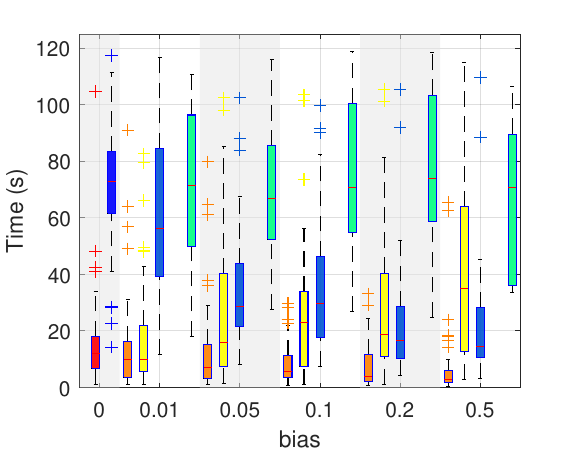}
    \end{subfigure}
    \begin{subfigure}[b]{0.32\textwidth}
        \centering
        \fontsize{5}{5}\selectfont 
        \includegraphics[width=\textwidth]{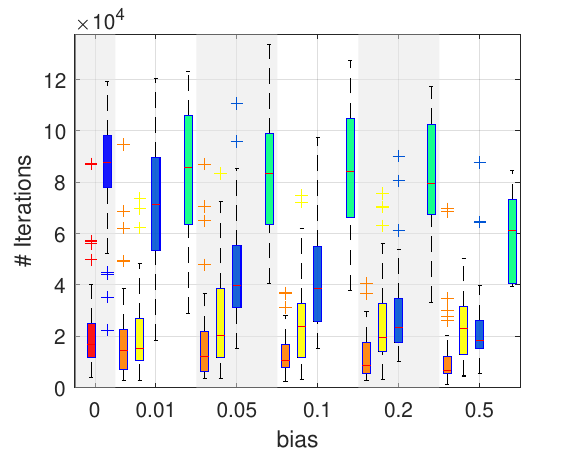}
    \end{subfigure}
    \caption{Pincer 3 Results}
\label{fig:Pincer3 Results}
\end{figure*}

\begin{figure*}
    \centering
    \begin{subfigure}[b]{0.32\textwidth}
        \centering
        \fontsize{5}{5}\selectfont 
        \includegraphics[width=\textwidth,trim={0cm 0.7cm 0cm 0cm}, clip]{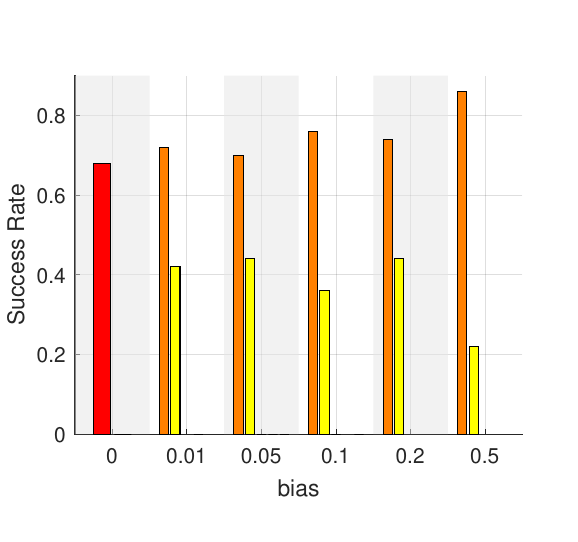}
    \end{subfigure}
    \begin{subfigure}[b]{0.32\textwidth}
        \centering
        \fontsize{5}{5}\selectfont 
        \includegraphics[width=\textwidth]{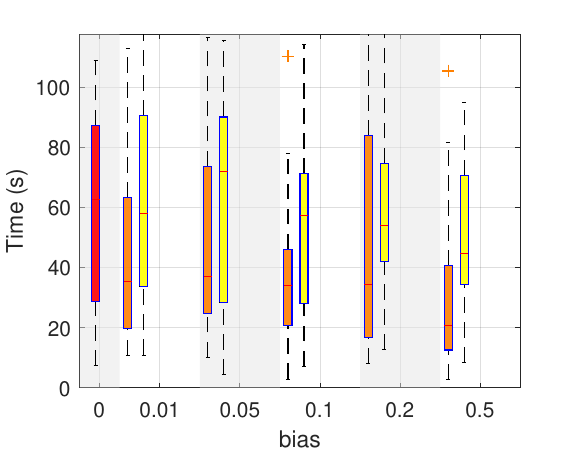}
    \end{subfigure}
    \begin{subfigure}[b]{0.32\textwidth}
        \centering
        \fontsize{5}{5}\selectfont 
        \includegraphics[width=\textwidth]{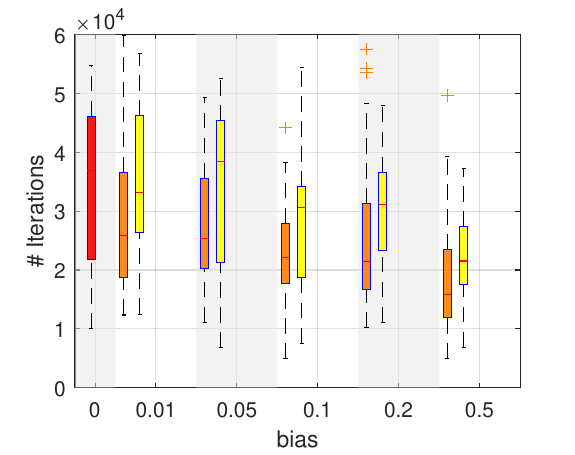}
    \end{subfigure}
    \caption{Pincer 4 Results}
    \label{fig:Pincer4 Results}
    \vspace{-0.5cm}
\end{figure*}

\begin{figure*}
    \centering
    \begin{subfigure}[b]{0.32\textwidth}
        \centering
        \fontsize{5}{5}\selectfont 
        \includegraphics[width=\textwidth,trim={0cm 0.7cm 0cm 0cm}, clip]{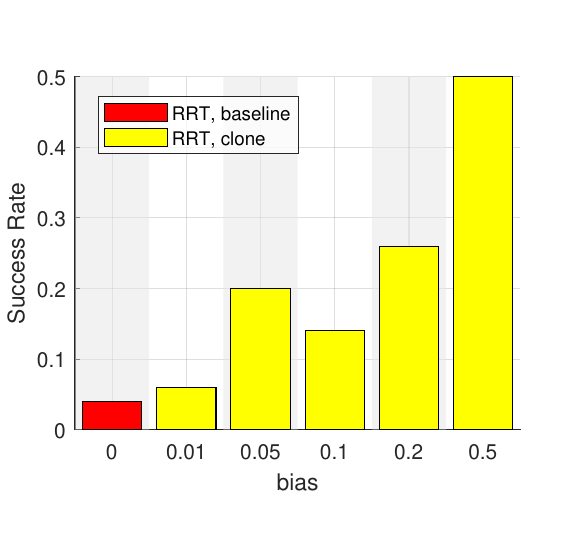}
    \end{subfigure}
    \begin{subfigure}[b]{0.32\textwidth}
        \centering
        \fontsize{5}{5}\selectfont 
        \includegraphics[width=\textwidth]{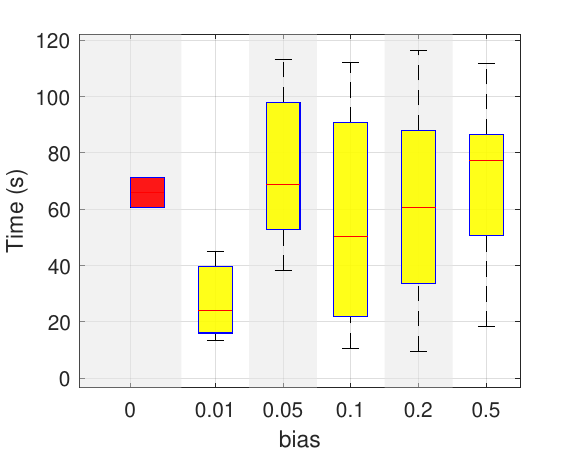}
    \end{subfigure}
    \begin{subfigure}[b]{0.32\textwidth}
        \centering
        \fontsize{5}{5}\selectfont 
        \includegraphics[width=\textwidth]{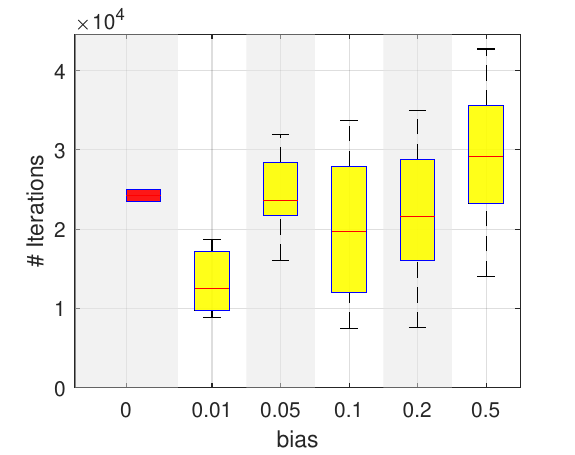}
    \end{subfigure}
    \caption{Pincer 5 Results}
    \label{fig:Pincer5 Results}
    \vspace{-0.5cm}
\end{figure*}

\end{document}